\newtheorem{lemma}[]{Lemma}
\newcommand{\expip}[3]{\exp \left(\vec{#1}\T\mat{R}_{#2}\vec{#3} \right)}
\newcommand{\Max}[1]{\raisebox{0.5ex}{\scalebox{0.8}{$\displaystyle \max_{#1}\,\,$}}}
\title{RelWalk -- A Latent Variable Model Approach to \\Knowledge Graph Embedding}
\author{ Danushka Bollegala$^{1}$\thanks{Danushka Bollegala holds concurrent appointments as a Professor at University of Liverpool and as an Amazon Scholar. This paper describes work performed at the University of Liverpool and is not associated with Amazon.}, Huda Hakami$^{2}$, Yuichi Yoshida$^{3}$ \and Ken-ichi Kawarabayashi$^{3}$ \\
$^1$University of Liverpool, Amazon. \texttt{danushka@liverpool.ac.uk}\\
$^2$Taif University \texttt{hahakami@tu.edu.sa}\\
$^3$National Institute of Informatics  \texttt{\{yyoshida,k\_keniti\}@nii.ac.jp}}
\date{}
\begin{document}
\maketitle
\begin{abstract}
Embedding entities and relations of a knowledge graph in a low-dimensional space has shown impressive performance in predicting missing links between entities. 
Although progresses have been achieved, existing methods are heuristically motivated and theoretical understanding of such embeddings is comparatively underdeveloped. 
This paper extends the random walk model~\cite{Arora:TACL:2016} of word embeddings to Knowledge Graph Embeddings (KGEs) to derive a scoring function that evaluates the strength of a relation $R$ between  two entities $h$ (head) and $t$ (tail).
Moreover, we show that marginal loss minimisation, a popular objective used in much prior work in KGE,
follows naturally from the log-likelihood ratio maximisation under the probabilities estimated from the KGEs according to our theoretical relationship. 
We propose a learning objective motivated by the theoretical analysis to learn KGEs from a given knowledge graph.
Using the derived objective, accurate KGEs are learnt  from FB15K237 and WN18RR benchmark datasets, providing empirical evidence in support of the theory.
\end{abstract}

\section{Introduction}
Knowedge graphs (KGs) such as Freebase~\cite{Freebase} organise information in the form of graphs, where entities are represented by the vertices and the relations between two entities are represented by the edges that connect the corresponding vertices.
Despite the best efforts to create complete and large-scale KGs, most KGs remain incomplete and do not represent all the relations that exist between entities~\cite{min2013distant}. 
In particular, new entities are constantly being generated, and new relations are formed between new as well as existing entities. 
Therefore, it is unrealistic to assume that a real-world KG would be complete at any given time point.
Developing approaches for KG completion is an important research field associated with KGs.

KG components can be embedded into numerical formats by learning representations (a.k.a embeddings) for the entities and relations in a given KG.
The learnt KGEs can be used for \emph{link prediction}, which is the task of predicting whether a particular relation exists between two given entities in the KG. 
Specifically, given KGEs for entities and relations, in link prediction, we predict $R$ that is most likely to exist between $h$ and $t$ according to some scoring formula.
Thus, by embedding entities and relations that exist in a KG in some (possibly lower-dimensional and latent) space, we can infer previously unseen relations between entities, thereby expanding a given KG. 
%~\cite{Nickel:AAAI:2016,Yang:ICLR:2015,Lin:AAAI:2015,Nickerl:ICML:2011,ComplexEmb,KBsurvey,Bordes:AAAI:2011}. 

KGE can be seen as a two-step process.
Given a KG represented by a set of relational triples $(h,R,t)$, where a semantic relation $R$ holds between a head entity $h$ and a tail entity $t$, first a scoring function is defined that measures the \emph{relational strength} of a triple $(h,R,t)$.
Second,the entity and relation embeddings that optimise the defined scoring function are learnt using some optimisation method.
Despite the wide applications of entity and relation embeddings created via KGE methods, the existing scoring functions are heuristically motivated to capture some geometric requirements of the embedding space.
For example, TransE~\cite{Bordes:AAAI:2011} assumes that the entity and relation embeddings co-exist in the same (possibly lower dimensional) vector space and translating (shifting)  the head entity embedding by the relation embedding must make it closer to the tail entity embedding, whereas ComplEx~\cite{ComplexEmb} models the asymmetry in relations using the component-wise multi-linear inner-product among entity and relation embeddings. 
% Relational triples extracted from a given KG are used as positive training instances, whereas pseudo-negative~\cite{Bordes:AAAI:2011} instances are automatically generated by randomly corrupting positive instances.

Theoretical understanding of KGE methods is under developed.
For example, it is not clear how the heuristically defined KGE objectives relate to the generative process of a KG.
Providing such a theoretical understanding of the KGE process will enable us to develop KGE methods that address the weaknesses in the existing KGE methods.
For this purpose, we propose Relational Walk (\textbf{RelWalk}),  a theoretically motivated generative approach for learning KGEs.
We are particularly interested in the semantic relationships that exist between entities such as the \textsf{is-CEO-of} relation between a 
person such as \textbf{Jeff Bezos} and a company such as \textbf{Amazon Inc.}

We model KGE as a random walk over the KG.
Specifically, a random walker at the vertex corresponding to the (head) entity $h$ will uniformly at random select one of the outgoing edges corresponding to the semantic relation $R$, which will lead it to the vertex corresponding to the (tail) entity $t$.
Continuing this random walk will result in a traversal over a path in the KG.
Based on this random walk model we derive a relationship between the probability of $R$ holding between $h$ and $t$, $p(h,t \mid R)$, and their KGEs $\mat{R}$, $\vec{h}$ and $\vec{t}$.
Interestingly, the derived relationship is not covered by any of the previously proposed heuristically-motivated scoring functions,
providing the first-ever KGE method with a provable generative explanation.

We show that the \emph{margin loss}, a popular training objective in prior work on KGE, naturally emerges as the log-likelihood ratio computed from the derived $p(h,t \mid R)$.
Based on this result, we derive a training objective that is optimised for learning KGEs that satisfy our theoretical relationship.
This enables us to empirically verify the theoretical relationships that we derived from the proposed random walk process.

Using  FB15K237 and WN18RR benchmarks, we evaluate the learnt KGEs on link prediction and triple classification.
Although we do not obtain state-of-the-art (SoTA) performance on these benchmark datasets, KGEs learnt using RelWalk perform consistently well on both tasks, providing empirical support to the theoretical analysis conducted in this paper.
We re-emphasise that our main objective in this paper is to study KGEs from an interpretable theoretical perspective and not necessarily improving SoTA.
To this end, we study the relationship between the concentration of the partition function as predicted by our theoretical analysis and the performance of the learnt KGEs.
We observe that when the partition function is narrowly distributed, we are able to learn accurate KGEs.
Moreover, we empirically verify that the learnt relation embedding matrices satisfy the orthogonality property as expected by the theoretical analysis.

\section{Related Work}
\label{sec:related}
At a high-level of abstraction, KGE methods can be seen as differing in their design choices for the following two main problems:
(a) how to represent entities and relations, and
(b) how to model the interaction between two entities and a relation that holds between them.
Next, we briefly discuss prior proposals to those two problems (refer to~\citet{KBsurvey,Nguyen:2017aa,Nickel:2016} for an extended survey on KGE).

A popular choice for representing entities is to use vectors~\cite{Bordes:NIPS:2013,ji-EtAl:2015:ACL-IJCNLP,Yang:ICLR:2015}, whereas relations have been represented by vectors, matrices~\cite{Bordes:AAAI:2011,Nguyen:NAACL:2016,Nickerl:ICML:2011} or tensors~\cite{Socher:NIPS:2013b}.
ComplEx~\cite{ComplexEmb} introduced complex vectors for KGEs to capture the asymmetry in semantic relations.
\citet{Ding:2018aa} further improved CompIEx by imposing non-negativity and entailment constraints to ComplEx.

Given entity and relation embeddings, a scoring function evaluates the strength of a triple $(h,R,t)$.
Scoring functions that encode various intuitions have been proposed such as the $\ell_{1}$ or $\ell_{2}$ norms of the vector formed by a translation of the head entity embedding by the relation embedding over the target embedding, or by first performing a projection from the entity embedding space to the relation embedding space~\cite{Yoon:2016aa}. 
%For example, to capture transitivity, \cite{Yoon:2016aa} proposed \emph{Logical Property Preserving} (LPP) embeddings, where a separate relation-specific mapping matrices are learnt for projecting head and tail entities to the relation space.
As an alternative to using vector norms as scoring functions, DistMult~\cite{Yang:ICLR:2015} and ComplEx~\cite{ComplexEmb} use the component-wise multi-linear dot product. 
\citet{Lacroix:2018aa} proposed the use of nuclear 3-norm regularisers instead of the popular Frobenius norm for canonical tensor decomposition. 
\autoref{tbl:scores} shows the scoring functions along with algebraic structures for entities and relations proposed in selected prior work in KGE learning.
Given a scoring function, KGEs are learnt that assign higher scores to relational triples in existing KGs over triples where the relation does not hold (negative triples) by minimising a loss function such as the logistic loss (RESCAL, DistMult, ComplEx) or marginal loss (TransE). 
% Pseudo negative triples can be generated either by perturbing positive tuples or by adversarial learning~\cite{KBGAN}.
%However, uniformly sampled negative triples are likely to be trivial examples that do not provide much information to the learning process and can be detected by simply checking for the type of the entities in a triple.
%\newcite{KBGAN} proposed an adversarial learning approach, where a \emph{generator} assigns a probability to each relation triple and negative instances are sampled according to this probability distribution to train a \emph{discriminator} that discriminates between positive and negative instances.
%\citet{xiao-huang-zhu:2016:P16-1} proposed TransG, a generative model based on the Chinese restaurant process, to model multiple relations that exist between a pair of entities. However, their relation embeddings are designed to satisfy vector translation similar to TransE.

Alternatively to directly learning embeddings from a graph, several methods~\cite{node2vec,DeepWalk,rdf2vec} have considered the vertices visited during truncated random walks over the graph as \emph{pseudo sentences}, and have applied popular word embedding learning algorithms such as continuous bag-of-words model~\cite{Milkov:2013} to learn vertex embeddings. However, pseudo sentences generated in this manner are syntactically very different from sentences in natural languages.

On the other hand, our work extends the random walk analysis by~\citet{Arora:TACL:2016} that derives a useful connection between the joint co-occurrence probability of two words and the $\ell_{2}$ norm of the sum of the corresponding word embeddings.
Specifically, they proposed a latent variable model where the words in a corpus are generated by a probabilistic model parametrised by a time-dependent discourse vector that performs a random walk.
In contrast to Arora's model that uses co-occurrences as a generic relation, in our work we include relations as labels for the edges in the graph. 
~\citet{Bollegala:AAAI:2018} extended the model proposed by~\citet{Arora:TACL:2016} to capture co-occurrences involving more than two words.
Specifically, they defined the co-occurrence of $k$ unique words in a given context as a $k$-way co-occurrence, where~\citet{Arora:TACL:2016} result could be seen as a special case corresponding to $k=2$.
Moreover, it has been shown that it is possible to learn word embeddings that capture some types of semantic relations such as antonymy and collocation using 3-way co-occurrences more accurately than using 2-way co-occurrences.
However, that model does not explicitly consider the relations between words/entities and uses only a corpus for learning the word embeddings.

\begin{table}[t!]
\centering
\scalebox{0.7}{ 
\begin{tabular}{l l l}
\toprule
KGE method & Score function & Relation \\ 
 &  $f(h, R, t)$ & parameters \\ \midrule
Unstructured\small{~\cite{bordes2014semantic}} & $\norm{\vec{h}-\vec{t}}_{\ell_{1/2}}$ & none \\
Structured\small{~\cite{Bordes:AAAI:2011}} & $\norm{\mat{R}_{1}\vec{h} - \mat{R}_{2}\vec{t}}_{\ell_{1,2}}$ & $\mat{R}_{1}, \mat{R}_{2} \in \R^{d \times d}$ \\
TransE\small{~\cite{Bordes:NIPS:2013}} & $\norm{\vec{h} + \vec{r} - \vec{t}}_{\ell_{1/2}}$ & $\vec{r} \in \R^{d}$\\
DistMult\small{~\cite{Yang:ICLR:2015}} & $\braket{\vec{h}, \vec{r}, \vec{t}}$ & $\vec{r} \in \R^{d}$ \\
RESCAL\small{~\cite{Nickerl:ICML:2011}} & $\vec{h}\T\mat{R}\vec{t}$ & $\mat{R} \in \R^{d \times d}$ \\
ComplEx\small{~\cite{ComplexEmb}}  &  $\braket{\vec{h}, \vec{r}, \bar{\vec{t}}}$ & $\vec{r} \in \mathbb{C}^{d}$ \\
\bottomrule
\end{tabular}
}
\caption{Score functions proposed in selected prior work on KGEs. Entity embeddings $\vec{h}, \vec{t} \in \R^{d}$ are vectors in all models, except in ComplEx where $\vec{h},\vec{t} \in \mathbb{C}^{d}$. Here, $\ell_{1/2}$ denotes either $\ell_{1}$ or $\ell_{2}$ norm of a vector. In ComplEx, $\bar{\vec{t}}$ is the element-wise complex conjugate.}
\label{tbl:scores}
\end{table}

\section{Relational Walk}
\label{sec:randwalk}
Let us consider a KG, $\cD$, where the \emph{knowledge} is represented by relational triples $(h,R,t) \in \cD$. Here, $R$ is a relational predicate with two arguments, where $h$ (\emph{head}) and $t$ (\emph{tail}) entities respectively filling the first and second arguments. In this work, we assume relations to be asymmetric in general (if $(h,R,t) \in \cD$ then it does not necessarily follow that $(t,R,h) \in \cD$). 
% A KG can then be seen as a directed edge-labelled graph where vertices represent entities and an edge connecting two vertices represents a semantic relation that exists between the corresponding entities.
The goal of KGE is to learn embeddings for the relations and entities in the KG such that the entities that participate in similar relations are embedded closely to each other in the entity embedding space, while at the same time relations that hold between similar entities are embedded closely to each other in the relational embedding space. 
We call the learnt entity and relation embeddings collectively as KGEs.
We assume that entities and relations are embedded in the same vector space, allowing us to perform linear algebraic operations using the embeddings in the same vector space.

Following our aforementioned modelling of a knowledge base as a graph, let us consider a random walker who is at a vertex corresponding to some entity $h$.
This entity will have one or more semantic relations with other entities in the KG.
The random walker will uniformly at random pick one of the outgoing edges corresponding to a particular semantic relation $R$, and follow it to land on the entity $t$.
This one-step of the random walk thus \emph{generates} a tuple $(h, R, t)$ in the KG.
The random walker proceeds by using $t$ as the new starting point.
Multiple steps of this random walk trace a single \emph{path} in the KG.

To illustrate a random walk over a KG, let us assume that we are currently at the vertex corresponding to the company entity \textbf{Amazon Inc.}
Possible outgoing edges at \textbf{Amazon Inc.} would correspond to semantic relations such as \textsf{has-ceo}, \textsf{is-headquarted-at}, \textsf{founded-in} etc., where \textbf{Amazon Inc.} is the head entity.
If there are only three such outgoing relations at \textbf{Amazon Inc.}, then the random walker will pick any one of those relations with a probability $1/3$.
For example, by selecting \textsf{has-ceo}, \textsf{is-headquarted-at} or \textsf{founded-in} the random walker would arrive at entities respectively 
\textbf{Jeff Bezos}, \textbf{Seattle} or \textbf{1994}.
Let us assume that the random walker selected the \textsf{has-ceo} relation and landed at \textbf{Jeff Bezos}.
The random walker might subsequently continue its random walk from \textbf{Jeff Bezos} following the relation \textsf{born-in} and transiting to \textbf{New Mexico, US}.
Prior work studying inferences in KGs have successfully used random walk models similar to what we describe here~\cite{gardner-etal-2013-improving,lao-EtAl:2012:EMNLP-CoNLL,lao-etal-2011-random,Lao_2010}.

Let us consider a random walk characterised by a time-dependent \emph{knowledge vector} $\vec{c}_{k}$, where $k$ is the current time step. The knowledge vector represents the knowledge we have about a particular group of entities and relations that express some facts about the world.
For example, when we are talking about \textbf{Amazon Inc.}, we will use the knowledge associated with \textbf{Amazon Inc.} such as its CEO, location of the headquarters, when it was founded etc. 
Therefore, it is intuitive to assume that the entities associated with \textbf{Amazon Inc.} with some set of semantic relations can be generated from this knowledge vector.
%For example, the knowledge that we have about people that are employed by companies can be expressed using entities of classes such as people and organisation, using relations such as \textsf{ceo-of, employed-at, works-for}, etc.
Each entity and relation has time-independent latent representations that capture their correlations with $\vec{c}_{k}$.
For entities $h$ and $t$, we denote their representations by  $d$-dimensional vectors respectively $\vec{h}, \vec{t} \in \R^{d}$.

We assume the task of generating a relational triple $(h,R,t)$ in a given KG to be a two-step process as described next.
First, given the current knowledge vector at time $k$, $\vec{c} = \vec{c}_{k}$ and the relation $R$, we assume that the probability of an entity $h$ satisfying the first argument of $R$ to be given by the loglinear entity production model in~\eqref{eq:h}.
\par\nobreak 
{\small
\vspace{-4mm}
\begin{align}
\label{eq:h}
p(h \mid R, \vec{c}) = \frac{1}{Z_{c}} \exp\left(\vec{h}\T \mat{R}_{1}\vec{c}\right).
\end{align}
}
Here, $\mat{R}_{1} \in \R^{d \times d}$ is a relation-specific orthogonal matrix that evaluates the appropriateness of $h$ for the first argument of $R$.
For example, if $R$ is the \textsf{is-ceo-of} relation, we would require a person as the first argument and a company as the second argument of $R$.
However, note that the role of $\mat{R}_{1}$ extends beyond simply checking the types of the entities that can fill the first argument of a relation.
For our example above, not all people are CEOs and $\mat{R}_{1}$ evaluates the likelihood of a person to be selected as the first argument of the \textsf{ceo-of} relation.
$Z_{c}$ is a normalisation coefficient such that $\sum_{h \in \cV} p(h \mid R,\vec{c}) = 1$, where the vocabulary $\cV$ is the set of all entities in the KG.
%\DB{We need to tell why $\exp$ is used and defend against being called this is also heuristic}

After generating $h$, the state of our random walker changes to $\vec{c}' = \vec{c}_{k+1}$, and we next generate the second argument of $R$ with the probability given by~\eqref{eq:t}.
\par\nobreak 
{\small
\vspace{-4mm}
\begin{align}
\label{eq:t}
p(t \mid R, \vec{c}') = \frac{1}{Z_{c'}} \exp\left(\vec{t}\T \mat{R}_{2}\vec{c}'\right).
\end{align}
}
Here, $\mat{R}_{2} \in \R^{d \times d}$ is a relation-specific orthogonal matrix that evaluates the appropriateness of $t$ as the second argument of $R$.
$Z_{c'}$ is a normalisation coefficient such that $\sum_{t \in \cV} p(t \mid R,\vec{c}') = 1$. 
Following our previous example of \textsf{is-ceo-of} relation, $\mat{R}_{2}$ evaluates the likelihood of an organisation to be a company with a CEO position. Importantly, $\mat{R}_{1}$ and $\mat{R}_{2}$ are representations of the relation $R$ and independent of the entities.
Therefore, we consider $(\mat{R}_{1}$ and $\mat{R}_{2})$ to collectively represent the embedding of $R$. 
Orthogonality of $\mat{R}_{1}, \mat{R}_{2}$ is a requirement for the mathematical proof and also acts as a regularisation constraint to prevent overfitting by restricting the relational embedding space~\cite{tang-etal-2020-orthogonal}.
Intuitively, orthogonality of the relation embedding matrices ensures that the length of the head and tail entity embeddings are not altered during the generation of the tuple.
Orthogonal transformation has been shown to improve the performance of relation representation in prior work. 
For example,~\citet{tang-etal-2020-orthogonal} apply orthogonal transformation that extends RotatE~\cite{sun2019rotate} to model complex relations (e.g., N-to-N). 
For relationships in word embedding space, \citet{ethayarajh-2019-rotate} use orthogonal transformation for analogical reasoning between words.
The performance of hypernymy prediction through orthogonal projections has been improved as shown in~\citet{wang2019family}.

The knowledge vector $\vec{c}_{k}$ performs a \emph{slow} random walk (meaning $\vec{c}_{k+1}$ is obtained from $\vec{c}_{k}$ by adding a small random displacement vector) such that the head and tail entities of a relation are generated under similar knowledge vectors. 
More specifically, we assume that $\norm{\vec{c}_{k} - \vec{c}_{k+1}} \leq \epsilon_{2}$ for some small $\epsilon_{2} > 0$. This is a realistic assumption for generating the two entity arguments in the same relational triple because, if the knowledge vectors were significantly different in the two generation steps, then it is likely that the corresponding relations are also different, which would not be coherent with the above-described generative process.
Moreover, we assume that the knowledge vectors are distributed uniformly in the unit sphere and denote the distribution of knowledge vectors by $\cC$.

To relate KGEs with the connections in the graph, we must estimate the probability that $h$ and $t$ satisfy the relation $R$, $p(h,t \mid R)$, which can be obtained by taking the expectation of $p(h,t \mid R, \vec{c}, \vec{c}')$ w.r.t. $\vec{c}, \vec{c}' \sim \cC$ given by~\eqref{eq:exp1}.
\par\nobreak
{\small
\vspace{-4mm}
\begin{align}
p(h,t \mid R) &= \Ep_{\vec{c},\vec{c}'} \left[ p(h,t  \mid R, \vec{c}, \vec{c}') \right]  \label{eq:exp1} \\
&= \Ep_{\vec{c},\vec{c}'} \left[ p(h \mid R, \vec{c}) p(t \mid R, \vec{c}') \right]  \label{eq:exp2} \\
&=  \Ep_{\vec{c},\vec{c}'} \left[  \frac{\exp\left(\vec{h}\T \mat{R}_{1}\vec{c}\right)}{Z_{c}} \frac{\exp\left(\vec{t}\T \mat{R}_{2}\vec{c}'\right)}{Z_{c'}} \right]. \label{eq:exp3}
\end{align}
}
Here, partition functions are given by
\par\nobreak
{\small
\vspace{-3mm}
\begin{align}
	\label{eq:c} Z_{c} &= \sum_{h \in \cV} \exp\left( \vec{h}\T\mat{R}_{1}\vec{c} \right) \\
	\label{eq:c-prime} Z_{c'} &= \sum_{t \in \cV} \exp\left( \vec{t}\T\mat{R}_{2}\vec{c}' \right)
\end{align}
}
\eqref{eq:exp2} follows from our two-step generative process where the generation of $h$ and $t$ in each step is independent given the relation and the corresponding knowledge vectors.

Computing the expectation in~\eqref{eq:exp3} is generally difficult because of the two partition functions $Z_{c}$ and $Z_{c'}$.
However, Lemma~\ref{lem:concentration}  shows that the partition functions are narrowly distributed around a constant value for all $c$ (or $c'$) values with high probability.

\begin{lemma}[\textbf{Concentration Lemma}]\label{lem:concentration}
  If the entity embedding vectors satisfy the Bayesian prior $\vec{v} = s \hat{\vec{v}}$, where $\vec{\hat{v}}$ is from the spherical Gaussian distribution, and $s$ is a scalar random variable, which is always bounded by a constant $\kappa$, then the entire ensemble of entity embeddings satisfies that:
  \par\nobreak
  {\small
  \vspace{-2mm}
  \begin{align}
    \Pr_{c \sim \cC}[(1-\epsilon_z) Z \leq Z_c \leq (1+\epsilon_z)Z]\geq 1-\delta, \label{eq:2.1-of-Arora}
  \end{align}
  }
  for $\epsilon_z = O (1/\sqrt{n})$, and $\delta = \exp(-\Omega(\log^2 n))$,
  where $n \geq d$ is the number of entities in a given KG and $Z_{c}$ is the partition function for $c$ given by $\sum_{h \in \cV}  \exp \left( \vec{h}\T\mat{R}_{1}\vec{c} \right)$.
  \end{lemma}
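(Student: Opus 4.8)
The plan is to first use the orthogonality of $\mat{R}_1$ to remove the matrix, and then adapt the concentration argument of~\citet{Arora:TACL:2016} to our setting. Since $\mat{R}_1$ is orthogonal and $\cC$ is the uniform distribution on the unit sphere, the pushforward of $\cC$ under $\vec{c}\mapsto\mat{R}_1\vec{c}$ is again $\cC$; because the entity embeddings are independent of $\vec{c}$, the partition function $Z_c=\sum_{h\in\cV}\exp(\vec{h}\T\mat{R}_1\vec{c})$ therefore has exactly the same joint law (over embeddings and over $\vec{c}$) as $\sum_{h\in\cV}\exp(\vec{h}\T\vec{c})$, for any fixed orthogonal $\mat{R}_1$. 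It thus suffices to prove the claim with $\mat{R}_1$ deleted, and from here on I write $X_h=\exp(\vec{h}\T\vec{c})$ and $Z_c=\sum_{h\in\cV}X_h$.

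Next I would pin down the centring constant. Fix the realised scalars $s_h$ (all at most $\kappa$) and take expectations over the Gaussian directions $\hat{\vec{h}}$: each $\vec{h}\T\vec{c}=s_h\hat{\vec{h}}\T\vec{c}$ is a univariate Gaussian with mean $0$ and variance $s_h^2\norm{\vec{c}}^2=s_h^2$, so its moment generating function gives $\Ep[X_h]=\exp(s_h^2/2)$, a quantity that does not depend on $\vec{c}$. Hence $Z:=\Ep[Z_c]=\sum_{h\in\cV}\exp(s_h^2/2)$ is a well-defined constant with every summand in $[1,e^{\kappa^2/2}]$, so $Z=\Theta(n)$; this is the $Z$ appearing in~\eqref{eq:2.1-of-Arora}.

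The core step, and the main obstacle, is to show that for each fixed unit vector $\vec{c}$ the sum $Z_c$ concentrates around $Z$ over the randomness of the $\hat{\vec{h}}$, with failure probability $\exp(-\Omega(\log^2 n))$. The difficulty is that $X_h$ is log-normal --- its $p$-th moment grows like $\exp(\Omega(p^2))$ --- so the summands are not sub-exponential and neither Hoeffding nor Bernstein applies directly. I would handle this by truncation: fix a level $\Lambda=n^{\beta}$ with $0<\beta<1/2$ (e.g.\ $\beta=1/3$) and split $Z_c=\sum_h X_h\mathbf{1}[X_h\le\Lambda]+\sum_h X_h\mathbf{1}[X_h>\Lambda]$. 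The truncated part has summands bounded by $\Lambda$ with $O(1)$ variance, so Bernstein's inequality makes a deviation larger than $\tfrac12\epsilon_z Z$ occur with probability at most $\exp(-\Omega(\min(\epsilon_z^2 n,\, \epsilon_z n/\Lambda)))=\exp(-\Omega(\log^2 n))$ once $\epsilon_z$ is of order $\log n/\sqrt{n}$ and $\beta<1/2$; and its mean differs from $Z$ by at most $n\sqrt{\Ep[X_h^2]\,\Pr[X_h>\Lambda]}$, which is super-polynomially small because $\Pr[X_h>\Lambda]=\Pr[\mathcal{N}(0,s_h^2)>\log\Lambda]\le\exp(-\log^2\Lambda/(2\kappa^2))$. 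For the remaining part, a union bound gives $\Pr[\exists h:X_h>\Lambda]\le n\exp(-\log^2\Lambda/(2\kappa^2))=\exp(-\Omega(\log^2 n))$, and on the complementary event that sum is $0$. Putting the two pieces together closes the fixed-$\vec{c}$ estimate.

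Finally I would transfer this to a random $\vec{c}$ with the embedding ensemble held fixed. Let $\mathcal{B}_c=\{\,|Z_c-Z|>\epsilon_z Z\,\}$; the previous step gives $\Pr_{\hat{\vec{h}}}[\mathcal{B}_c]\le\delta'$ for every $\vec{c}$, with $\delta'=\exp(-\Omega(\log^2 n))$. By Fubini, $\Ep_{\hat{\vec{h}}}\!\big[\Pr_{\vec{c}\sim\cC}[\mathcal{B}_c]\big]=\Ep_{\vec{c}\sim\cC}\!\big[\Pr_{\hat{\vec{h}}}[\mathcal{B}_c]\big]\le\delta'$, so Markov's inequality shows that with probability at least $1-\sqrt{\delta'}$ over the ensemble one has $\Pr_{\vec{c}\sim\cC}[\mathcal{B}_c]\le\sqrt{\delta'}$; renaming $\sqrt{\delta'}$ as $\delta$ gives~\eqref{eq:2.1-of-Arora} with $\delta=\exp(-\Omega(\log^2 n))$ and $\epsilon_z=O(1/\sqrt{n})$ up to logarithmic factors. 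The single delicate point is the truncation bookkeeping --- $\Lambda$ must be large enough for the tail union bound to beat $\exp(-\Omega(\log^2 n))$ yet small enough for Bernstein on the truncated sum to do the same --- and it is exactly the boundedness hypothesis $s_h\le\kappa$ that guarantees such a window exists.
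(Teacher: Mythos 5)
Your proof is correct and follows essentially the same route as the paper's: reduce away $\mat{R}_{1}$ by orthogonality, compute $\Ep[Z_{c}]$ via the Gaussian moment generating function so that $Z$ is a $c$-independent constant of order $n$, and then deal with the log-normal (neither sub-Gaussian nor sub-exponential) summands. The only difference is packaging: after bounding the mean and variance, the paper delegates the core concentration step to the argument of Arora et al.\ (Appendix~A.1 of their paper) for partition functions with bounded mean and variance, whereas you reconstruct that truncation-plus-Bernstein argument and the Fubini/Markov transfer from fixed to random $\vec{c}$ explicitly --- at the cost of a logarithmic factor in $\epsilon_{z}$ that the lemma's stated $O(1/\sqrt{n})$ (really $\widetilde{O}(1/\sqrt{n})$ in the source result) quietly absorbs.
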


Refer to~\autoref{sec:LemmaProof} for the proof of the concentration lemma.
We empirically investigate the relationship between the performance of the KGEs and the degree to which Lemma~\ref{lem:concentration} is satisfied in \autoref{sec:conc-orth}.
Under the conditions required to satisfy Lemma~\ref{lem:concentration}, the following main theorem of this paper holds:
\begin{theorem}\label{th:KB}
  Suppose that the entity embeddings satisfy~\eqref{lem:concentration}.
  Then, we have 
  \par\nobreak
  {\small
  \vspace{-2mm}
  \begin{align}
  \label{eq:joint-prob}
    \log p(h,t \mid R) = \frac{\norm{\mat{R}_{1}\T\vec{h} + \mat{R}_{2}\T\vec{t}}_{2}^2}{2d} - 2\log Z  \pm \epsilon.
  \end{align}
  }
  for $\epsilon = O(1/\sqrt{n}) + \widetilde{O}(1/d)$, where
  \par\nobreak
  {\small
  \vspace{-2mm}
\begin{align}
 \label{eq:Z}
 Z = Z_{c} = Z_{c'}.
\end{align}
}
\end{theorem}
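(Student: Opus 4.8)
The plan is to follow the random-walk analysis of~\citet{Arora:TACL:2016}, adapted to the two-step generative model of~\eqref{eq:h}--\eqref{eq:t}. Starting from~\eqref{eq:exp3}, the first step is to eliminate the two partition functions. By Lemma~\ref{lem:concentration}, both $Z_{c}$ and $Z_{c'}$ lie in $[(1-\epsilon_{z})Z,\,(1+\epsilon_{z})Z]$ except on an event of probability at most $\delta=\exp(-\Omega(\log^{2}n))$. On this good event we replace $1/(Z_{c}Z_{c'})$ by $1/Z^{2}$ at the cost of a multiplicative $(1\pm O(\epsilon_{z}))$ factor; on the bad event the integrand is bounded by a constant (the exponents $\vec{h}\T\mat{R}_{1}\vec{c}$ and $\vec{t}\T\mat{R}_{2}\vec{c}'$ are $O(1)$ by Cauchy--Schwarz, since $\norm{\vec{c}}=\norm{\vec{c}'}=1$, the entity norms are controlled by the prior of Lemma~\ref{lem:concentration}, and $\mat{R}_{1},\mat{R}_{2}$ are orthogonal), so its contribution is dominated by $\delta$ and absorbed into the error. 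This reduces the task to estimating $Z^{-2}\,\Ep_{\vec{c},\vec{c}'}\!\left[\exp(\vec{h}\T\mat{R}_{1}\vec{c})\exp(\vec{t}\T\mat{R}_{2}\vec{c}')\right]$ up to a $(1\pm O(1/\sqrt{n}))$ factor.

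The second step uses the slow-walk assumption $\norm{\vec{c}-\vec{c}'}\le\epsilon_{2}$. Conditioning on $\vec{c}$ and writing $\vec{t}\T\mat{R}_{2}\vec{c}' = \vec{t}\T\mat{R}_{2}\vec{c} + \vec{t}\T\mat{R}_{2}(\vec{c}'-\vec{c})$, the orthogonality of $\mat{R}_{2}$ together with the bound on $\norm{\vec{t}}$ gives $|\vec{t}\T\mat{R}_{2}(\vec{c}'-\vec{c})| = O(\epsilon_{2})$, hence $\exp(\vec{t}\T\mat{R}_{2}\vec{c}') = \exp(\vec{t}\T\mat{R}_{2}\vec{c})\,(1\pm O(\epsilon_{2}))$. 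Absorbing this factor into the error (with $\epsilon_{2}$ assumed small) and using $\vec{h}\T\mat{R}_{1}\vec{c} = (\mat{R}_{1}\T\vec{h})\T\vec{c}$, the problem collapses to estimating
\[
\frac{1}{Z^{2}}\,\Ep_{\vec{c}\sim\cC}\!\left[\exp\!\big((\mat{R}_{1}\T\vec{h}+\mat{R}_{2}\T\vec{t})\T\vec{c}\big)\right].
\]

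The third and main step is a moment-generating-function estimate for the knowledge-vector distribution $\cC$, which is uniform on the unit sphere. Writing $\vec{w} = \mat{R}_{1}\T\vec{h}+\mat{R}_{2}\T\vec{t}$, so that $\norm{\vec{w}}$ is bounded, the claim is
\[
\Ep_{\vec{c}\sim\cC}\!\left[\exp(\vec{w}\T\vec{c})\right] = \exp\!\left(\frac{\norm{\vec{w}}_{2}^{2}}{2d}\right)\big(1+\widetilde{O}(1/d)\big).
\]
This is the analogue of the corresponding estimate in~\citet{Arora:TACL:2016}: a uniform point on the $d$-sphere has coordinates that are mean-zero with variance $1/d$ and sub-Gaussian tails, so $\vec{w}\T\vec{c}$ is close to a Gaussian with mean $0$ and variance $\norm{\vec{w}}_{2}^{2}/d$, whose moment generating function is exactly $\exp(\norm{\vec{w}}_{2}^{2}/(2d))$; quantifying the departure from Gaussianity to relative error $\widetilde{O}(1/d)$, uniformly over the bounded set of admissible $\vec{w}$, is the technical crux. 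Taking logarithms of the product of the estimates from the three steps and collecting the error contributions --- $O(1/\sqrt{n})$ from the concentration step, $O(\epsilon_{2})$ from the slow-walk step, and $\widetilde{O}(1/d)$ from the MGF estimate --- gives $\log p(h,t\mid R) = \norm{\mat{R}_{1}\T\vec{h}+\mat{R}_{2}\T\vec{t}}_{2}^{2}/(2d) - 2\log Z \pm \epsilon$ with $\epsilon = O(1/\sqrt{n})+\widetilde{O}(1/d)$, as claimed. The first two steps are largely bookkeeping given Lemma~\ref{lem:concentration} and the modelling assumptions; the obstacle is the last uniform relative-error estimate.
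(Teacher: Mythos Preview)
Your overall plan matches the paper's, but the handling of the bad event contains a genuine gap. You claim that on the event where the concentration fails, the integrand $\exp(\vec{h}\T\mat{R}_{1}\vec{c})\exp(\vec{t}\T\mat{R}_{2}\vec{c}')/(Z_{c}Z_{c'})$ is bounded by a constant because the exponents are $O(1)$ by Cauchy--Schwarz. This is not correct: under the prior of Lemma~\ref{lem:concentration}, entity embeddings have norm $\Theta(\sqrt{d})$ (the vector $\hat{\vec{v}}$ is a $d$-dimensional spherical Gaussian, so $\norm{\hat{\vec{v}}}\approx\sqrt{d}$, and the scalar $s$ is only bounded by the constant $\kappa$). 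Hence $|\vec{h}\T\mat{R}_{1}\vec{c}|\le\norm{\vec{h}}\cdot\norm{\vec{c}}=O(\sqrt{d})$ in the worst case, and the integrand can be as large as $\exp(O(\sqrt{d}))$. Multiplying this by the bad-event probability $\delta=\exp(-\Omega(\log^{2}n))$ does not yield something absorbable into $O(1/\sqrt{n})+\widetilde{O}(1/d)$ without further assumptions relating $d$ and $n$.

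The paper addresses exactly this point with a nontrivial argument: it applies Cauchy--Schwarz to separate the $c$ and $c'$ integrals, then uses a continuous Abel inequality (Lemma~A.4 of~\cite{arora2015rand}) to show that the worst correlation between $\exp(2\vec{h}\T\mat{R}_{1}\vec{c})$ and the bad-event indicator is realised by a threshold event $\{z\ge t\}$, and finally converts the probability bound $\exp(-\Omega(\log^{2}n))$ into a lower bound $t\ge\Omega(\log^{0.9}n)$ via a spherical tail estimate, obtaining $T_{2}=\exp(-\Omega(\log^{1.8}n))$. The same norm issue affects your slow-walk step: $|\vec{t}\T\mat{R}_{2}(\vec{c}'-\vec{c})|\le\norm{\vec{t}}\,\norm{\vec{c}'-\vec{c}}=O(\sqrt{d}\,\epsilon_{2})$, not $O(\epsilon_{2})$; the paper absorbs the $\sqrt{d}$ factor into the modelling assumption on the walk rather than ignoring it. Your identification of the spherical MGF estimate as the technical crux of the final step is accurate, but the bad-event term is not mere bookkeeping.
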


\begin{proof}[Proof sketch:]\label{proof:th}
Let $F$ be the event that both $c$ and $c'$ are within $(1\pm \epsilon_{z})Z$. Then, from Lemma~\ref{lem:concentration}
and the union bound, event $F$ happens with probability at least $1-2\exp(-\Omega(\log^{2}n))$. The R.H.S. of \eqref{eq:exp3} can be split into two parts $T_{1}$ and $T_{2}$ according to whether $F$ happens or not.
{\small
\begin{align}
\label{eq:T12}
&p(h,t \mid R) = \nonumber \\ 
& \underbrace{\Ep_{c,c'}\left[ \frac{\exp\left(\vec{h}\T\mat{R}_{1}\vec{c}\right)}{Z_{c}} \frac{\exp\left(\vec{h}\T\mat{R}_{2}\vec{c'}\right)}{Z_{c'}} \vec{1}_{F} \right]}_{=T_{1}} \nonumber \\
&+ \underbrace{\Ep_{c,c'}\left[ \frac{\exp\left(\vec{h}\T\mat{R}_{1}\vec{c}\right)}{Z_{c}} \frac{\exp\left(\vec{h}\T\mat{R}_{2}\vec{c'}\right)}{Z_{c'}} \vec{1}_{\bar{F}} \right]}_{=T_{2}}.
\end{align}
}
$T_{1}$ can be approximated as given by \eqref{eq:T1:approx}.
{\small
\begin{align}\label{eq:T1:approx}
 T_{1} =  \frac{1\pm\O(\epsilon_{z})}{Z^{2}} \Ep_{c,c'} \left[ \expip{h}{1}{c} \expip{t}{2}{c'}\right] 
\end{align}
}%
On the other hand, $T_{2}$ can be shown to be a constant, independent of $d$, given by \eqref{eq:T2:approx}.
{\small
\begin{align}\label{eq:T2:approx}
 |T_{2}| = \exp(-\Omega(\log^{1.8}n))
\end{align}
}%
The vocabulary size $n$ of real-world KGs is typically over $10^{5}$, for which $T_{2}$ becomes negligibly small.
Therefore, it suffices to consider only $T_{1}$. Because of the slowness of the random walk we have $\vec{c} \approx \vec{c}'$.

Using the law of total expectation we can write $T_{1}$ as follows:
{\small
\begin{align}
 T_{1}  &= \frac{1\pm\O(\epsilon_{z})}{Z^{2}} \Ep_{c} \left[ \expip{h}{1}{c} \Ep_{c'|c} \left[ \expip{t}{2}{c'} \right] \right] \nonumber \\
 &=\frac{1\pm\O(\epsilon_{z})}{Z^{2}} \Ep_{c} \left[ \expip{h}{1}{c} A(c) \right] \label{eq:T1:final}
\end{align}
}
where $A(c) \coloneqq \Ep_{c'|c}\left[ \expip{t}{2}{c'}\right]$.
Doing some further evaluations we show that 
{\small
\begin{align}
\label{eq:A}
A(c) = (1 \pm \epsilon_{2}) \expip{t}{2}{c}
\end{align}
}%
Plugging \eqref{eq:A} back in \eqref{eq:T1:final} provides the claim of the theorem.
Detailed proof is shown in~\autoref{sec:TheoremProof}.
\end{proof}
\vspace{-2mm}
The relationship given by \eqref{eq:joint-prob} indicates that  head and tail entity embeddings are first transformed respectively by $\mat{R}_{1}\T$ and $\mat{R}_{2}\T$, and the squared $\ell_{2}$ norm of the sum of the transformed vectors is proportional to the probability $p(h,t \mid R)$.
%proof for theorem 1
\section{Learning KG Embeddings}
\label{sec:learn}

In this section, we derive a training objective from \autoref{th:KB} that we can then optimise to learn KGEs.
The goal is to empirically validate the theoretical result by evaluating the learnt KGEs.
KGs represent information about relations between two entities in the form of \emph{relational triples}.
The joint probability $p(h,R,t)$ given by \autoref{th:KB} is useful for determining whether a relation $R$ exists between two given entities $h$ and $t$. For example, if we know that with a high probability that $R$ holds between $h$ and $t$, then we can append $(h,R,t)$ to the KG.
The task of expanding KGs by predicting missing links between entities or relations is known as the \emph{link prediction} problem~\cite{ComplexEmb}.
In particular, if we can automatically append such previously unknown knowledge to the KG, we can expand the KG and address the knowledge acquisition bottleneck.
% add few citations for the previous para.

To derive a criteria for determining whether a link must be predicted among entities and relations, let us consider a relational triple $(h,R,t) \in \cD$ that exists in a given KG $\cD$. We call such relational triples as \emph{positive} triples because from the assumption it is known that $R$ holds between $h$ and $t$. On the other hand, consider a \emph{negative} relational triple $(h', R, t') \in \cD$ formed by, for example, randomly perturbing a positive triple. A popular technique for generating such (pseudo) negative triples is to replace $h$ or $t$ with a randomly selected different instance of the same entity type. As an alternative for random perturbation,  \newcite{KBGAN} proposed a method for generating negative instances using adversarial learning. Here, we are not concerned about the actual method used for generating the negative triples but assume a set of negative triples, $\bar{\cD}$, generated using some method, to be given.

Given a positive triple $(h,R,t) \in \cD$ and a negative triple $(h',R,t') \in \bar{\cD}$, we would like to learn KGEs such that a higher probability is assigned to $(h,R,t)$ than that assigned to $(h',R,t')$. We can formalise this requirement using the likelihood ratio given by \eqref{eq:llr}.
\par\nobreak
{\small
\vspace{-4mm}
\begin{align}
 \label{eq:llr}
 \frac{p(h,R,t)}{p(h',R,t')} \geq \eta
\end{align}
}
Here, $\eta > 1$ is a threshold that determines how higher we would like to set the probabilities for the positive triples compares to that of the negative triples.

By taking the logarithm of both sides in \eqref{eq:llr} we obtain
{\small
\begin{align}
 \log p(h,R,t) - \log p(h', R, t') &\geq \log\eta \nonumber \\
 \log\eta + \log p(h', R, t') -  \log p(h,R,t) & \leq 0 \label{eq:llr2}
\end{align}
}
If a positive triple $(h,R,t)$ is correctly assigned a higher probability than a negative triple $p(h',R,t')$, then the left hand side of \eqref{eq:llr2} will be negative, indicating that there is no \emph{loss} incurred during this classification task. 
Therefore, we can re-write \eqref{eq:llr2} to obtain the \emph{marginal loss}~\cite{Bordes:NIPS:2013,Bordes:AAAI:2011}, $L(\cD, \bar{\cD})$, a popular choice as a learning objective in prior work in KGE, as shown in \eqref{eq:loss}.
\par\nobreak
{\small
\begin{align}
\label{eq:loss}
& L(\cD, \bar{\cD}) = \nonumber \\ 
& \mkern-18mu \sum_{\substack{(h,R,t)\in \cD \\ (h',R,t') \in \bar{\cD}}} \mkern-24mu \max \left( 0,  \log\eta + \log p(h', R, t') -  \log p(h,R,t) \right) \nonumber \\
 =& \max \Big( 0, 2d\log\eta + \norm{\mat{R}_{1}\T\vec{h}' + \mat{R}_{2}\T\vec{t}'}_{2}^2 \nonumber \\
 &- \norm{\mat{R}_{1}\T\vec{h} + \mat{R}_{2}\T\vec{t}}_{2}^2 \Big) 
\end{align}
}
We can assume $2d\log\eta$ to be the \emph{margin} for the constraint violation. 

\autoref{th:KB} requires $\mat{R}_{1}$ and $\mat{R}_{2}$ to be orthogonal. 
To reflect this requirement, we add two $\ell_{2}$ regularisation terms $\norm{\mat{R}_{1}\T\mat{R}_{1} - \mat{I}}_{2}^{2}$
and $\norm{\mat{R}_{2}\T\mat{R}_{2} - \mat{I}}_{2}^{2}$ respectively with regularisation coefficients $\lambda_{1}$ and $\lambda_{2}$ to the objective function given by \eqref{eq:loss}.
In our experiments, we compute the gradients \eqref{eq:loss} w.r.t. each of the parameters
$\vec{h}$, $\vec{t}$, $\vec{R}_{1}$ and $\vec{R}_{2}$ and use stochastic gradient descent (SGD) for optimisation.
Considering that negative triples are generated via random perturbation, it is important to consider multiple negative triples during training to better estimate the classification boundary.
This approach can be easily extended to learn from multiple negative triples as shown in~\autoref{sec:MultipleNeg}.

%The next section will empirically assess entity and relation embeddings of a given KG that are learnt under the scoring function derived by the \textbf{RelWalk} model.  

\section{Empirical validation}

\begin{table*}[!htb]
\small
    \begin{subtable}{.6\linewidth}
      \centering
        % \caption{}
        \scalebox{0.8}{
        \begin{tabular}{l c c c c c  c c c c c} \toprule
			 & \multicolumn{5}{c}{FB15K237} & \multicolumn{5}{c}{WN18RR} \\ 
			 \cmidrule(lr){2-6}  \cmidrule(lr){7-11} \\
			Method& MRR& MR & H@1&H@3&H@10 & MRR& MR & H@1&H@3&H@10\\ \midrule
			TransE$^\bullet$ &0.294&347&-&-&0.465&0.226&3384&-&-&0.50 \\
			TransD$^\triangleleft$ &0.280&-&-&-&0.453&-&-&-&-&0.43 \\
			DistMult$^\star$ &0.241&254&0.155&0.263&0.419&0.430&5110&0.39&0.44&0.49\\
			ComplEx$^\star$ &0.247&339&0.158&0.275&0.428&0.440&5261&0.41&0.46&0.51\\
			ConvE\scriptsize{~\cite{dettmers2017convolutional}} & 0.325&244&0.237& \textbf{0.356}&0.501& 0.430 &4187&0.40&0.44&0.52\\
			CP-N3\scriptsize{~\cite{Lacroix:2018aa}} &  \textbf{0.360} & - & - & - & \textbf{0.540} & \textbf{0.470} &  - & - & - & \textbf{0.54} \\
			\hline
			RelWalk & 0.329 &\textbf{105}&\textbf{0.243}& 0.354 & 0.502 &0.451&\textbf{3232}		&\textbf{0.42}&\textbf{0.47}& 0.51\\
			\bottomrule
	\end{tabular}}
    \end{subtable}%
    \begin{subtable}{.56\linewidth}
      \centering
        % \caption{}
        \scalebox{0.8}{
        \begin{tabular}{l c } \toprule
			Method   &	Accuracy \\\midrule
			Structured$^\diamond$ & 75.2 \\
			TransE$^\diamond$  & 81.5 \\
			TransR\scriptsize{~\cite{Lin:AAAI:2015}} & 82.5 \\
			TransG\scriptsize{~\cite{xiao-huang-zhu:2016:P16-1}}  & 87.3 \\
			NTN\scriptsize{~\cite{Socher:NIPS:2013b}}  & 87.2 \\
			\hline
			RelWalk & \textbf{88.6} \\
			\bottomrule
			\end{tabular}}
    \end{subtable} 
    \caption{Results of link prediction (left) and triple classification on FB13 (right). Results marked with $[\star]$ are taken from~\cite{dettmers2017convolutional}, $[\bullet]$ from~\cite{Nguyen:NAACL:2016},$[\triangleleft]$ from~\cite{KBGAN} and $[\diamond]$ from~\cite{Wang:AAAI:2014}. All other results for the baselines are taken from their original papers.}
    \label{tab:EmpiricalRes}
\end{table*}

To empirically evaluate the theoretical result stated in \autoref{th:KB}, we learn KGEs (denoted by \textbf{RelWalk}) by minimising the marginal loss objective derived in \autoref{sec:learn}. We use the FB15k237, FB13 (subsets of \emph{Freebase}) and WN18RR (a subset of \emph{WordNet}) datasets, which are standard benchmarks for KGE. 
% tell the issues due to symmetry
We use the standard training, validation and test splits. Statistics about the datasets and training details are  in~\autoref{sec:TrainDetails}. 
\textbf{RelWalk} is implemented in the open-source toolkit OpenKE~\cite{han2018openke} and the code and learnt KGEs will are publicly available\footnote{\url{https://github.com/LivNLP/Relational-Walk-for-Knowledge-Graphs}}.

\begin{table}[h!]
 \small
\centering
 \scalebox{0.85}{
 \begin{tabular}{p{26mm} p{5mm} p{6mm} p{6mm} p{5mm} c}\toprule
 Relation & H@10 & $\nu_{R}$ & $\sigma_{c}$ & $\sigma_{c'}$ &  $\sqrt{\sigma_{c}^2+\sigma_{c'}^2}$  \\ \midrule
 hypernym  & 0.188 & 3.249 & 68.89  & 64.41  &  94.31  \\
 derivational  & 0.955 & 1.690 & 63.44  & 65.33  &  91.07 \\
 instance\_hypernym  & 0.541 & 0.362 & 63.11  & 64.56  & 90.28  \\
 also\_see  & 0.670 & 0.234 &  70.76 &  61.51 & 93.76  \\
 member\_meronym  & 0.281 & 4.389 & 63.78  & 66.09  & 91.84  \\
 synset\_domain\_topic  & 0.513 & 0.727 & 65.66  &  65.48 &92.73   \\
 has\_part  & 0.247 & 0.548 & 66.21  &  66.50 &  93.84 \\
 domain\_usage  & 0.688 & 0.045 &  65.24 & 63.16  &90.81   \\
 domain\_region  & 0.442 & 0.065 & 67.53  & 66.31 &94.64   \\
 verb\_group  & 0.974 & 0.038 & 64.22  & 63.19  & 90.09  \\
 similar\_to  & 1.000 & 0.111 & 63.67  & 63.96 & 90.25  \\
 \midrule
 Correlations  & & $-0.51$ &  $-0.39$ & $-0.49$ &  $-0.70$ \\
 \bottomrule
 \end{tabular}}
 \caption{Empirical analysis of the concentration of the partitioning functions and the orthogonality of the relation embeddings, and their Pearson correlation coefficients against H@10 for the relations in WN18RR.}
 \label{tbl:co}
\end{table}

We conduct two evaluation tasks: \emph{link prediction} (predict the missing head or tail entity in a given triple $(h, R, ?)$ or $(?,R,t)$)~\cite{Bordes:AAAI:2011} and \emph{triple classification} (predict whether a relation $R$ holds between $h$ and $t$ in a given triple $(h,R,t)$)~\cite{Socher:NIPS:2013b}. 
We evaluate the performance in the link prediction task using mean reciprocal rank (\textbf{MRR}), mean rank (\textbf{MR}) (the average of the rank assigned to the original head or tail entity in a corrupted triple) and hits at ranks 1, 3 and 10 (\textbf{H@1, 3, 10}),
  whereas in the triple classification task we use \textbf{accuracy} (percentage of the correctly classified test triples). 
We only report scores under the \emph{filtered} setting~\cite{Bordes:NIPS:2013}, which removes all triples appeared in training, validating and testing sets from candidate triples before obtaining the rank of the ground truth triple.
In link prediction, we consider all entities that appear in the corresponding argument in the entire knowledge graph as candidates.

In~\autoref{tab:EmpiricalRes} we compare the KGEs learnt by \textbf{RelWalk} against prior work using the published results.
 For triple classification, \textbf{RelWalk} reports the best performance on FB13, outperforming all methods compared.
%Considering that both TransG and \textbf{RelWalk} are generative models, it would be interesting to further investigate generative approaches for KGE in the future.
For the link prediction results as shown in~\autoref{tab:EmpiricalRes}, we see that \textbf{RelWalk} obtains competitive performance on both WN18RR and FB15K237 under all evaluation measures. 
In particular, it is outperformed by the KGE method proposed by~\citet{Lacroix:2018aa} (\textbf{CP-N3}), which uses nuclear 3-norm regularisers with canonical tensor decomposition.
Interestingly, the improvement against structured embeddings (SE) is consistent and interesting because the scoring function of SE closely resembles that of RelWalk as we can redefine $\mat{R}_{2}$ with the negative sign. However, SE learns KGEs that \emph{minimise} the $\ell_{1,2}$ norm whereas according to \eqref{eq:joint-prob} we must \emph{maximise} the probability for relational triples in a knowledge graph. 
WN18RR excludes triples from WN18 that are simply inverted between train and test partitions~\cite{Toutanova:2015,1707.01476}, making it a difficult dataset for link prediction using simple memorisation heuristics.
 \textbf{RelWalk}'s consistent good performance on both versions of this dataset shows that it is considering the global structure in the KG when learning KGEs. 
 
 We note that our goal in this paper is \emph{not} to claim SoTA for KGE but to provide a theoretical understanding with empirical validation.
To this end, the experimental results support our theoretical claim and emphasise the importance of theoretically motivating the KGE scoring function design process. 

\subsection{Orthogonality and Concentration}
\label{sec:conc-orth}

Our theoretical analysis depends on two main assumptions: (a) concentration of the partition function $Z_{c}$ (Lemma~\ref{lem:concentration}), and (b) the orthogonality of the relation embedding matrices $\mat{R}_{1}, \mat{R}_{2}$. 
In this section, we empirically study the relationship between these assumptions and the performance of RelWalk.
 \begin{figure*}[t!]
 \centering
 \includegraphics[width=5.8in]{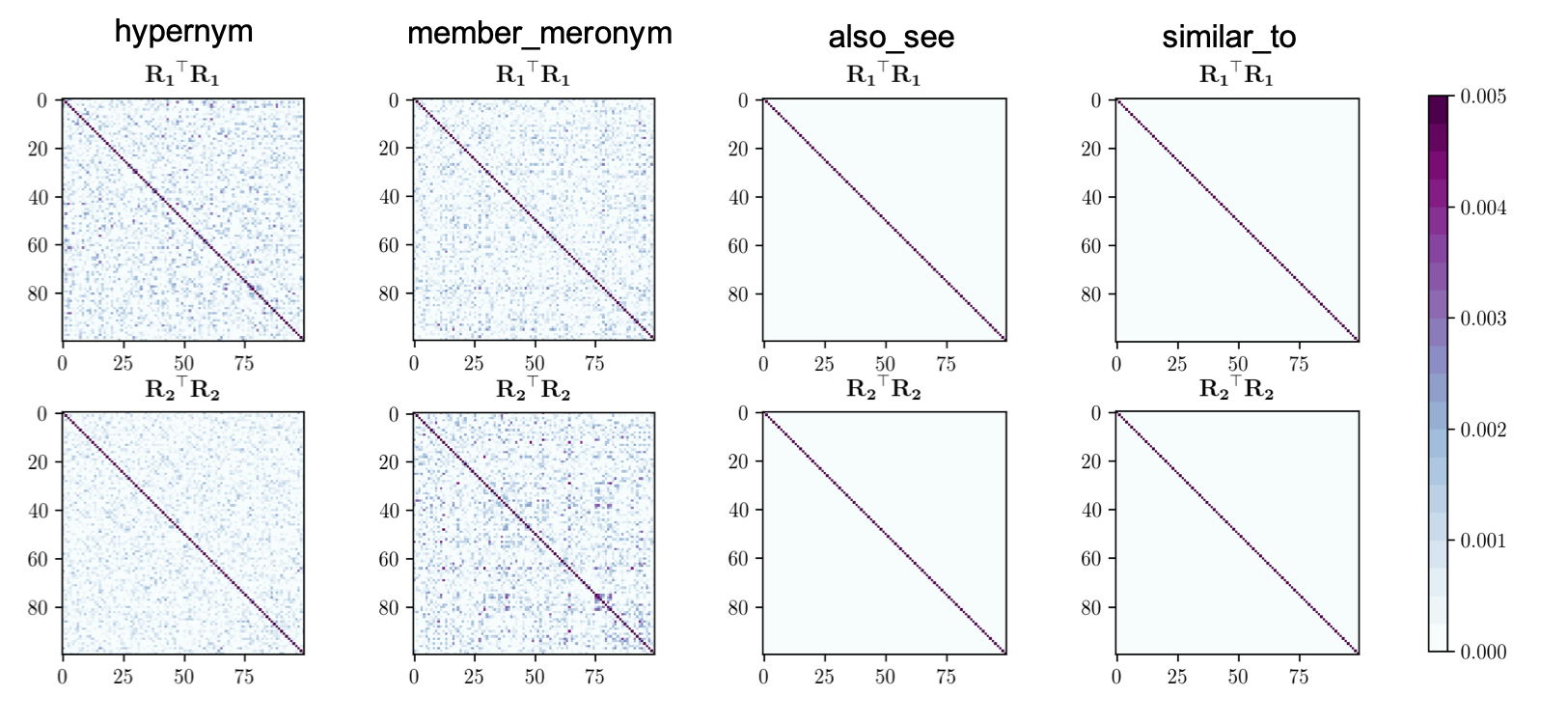}
 \caption{Heatmap visualisation of the orthogonality in different relation embeddings from the WN18RR.}
 \label{fig:orth}
\end{figure*}

Given $\mat{R}_{1}$ and $\mat{R}_{2}$ learnt by RelWalk for a particular $R$, we can measure the degree to which the orthogonality, $\nu_{R}$, is satisfied by the sum of the non-diagonal elements \eqref{eq:orthogonality}.
\par\nobreak
{\small
\begin{align}
 \label{eq:orthogonality}
 \nu_{R} = \sum_{i \neq j} |\mat{R}_{1}\T\mat{R}_{1}|_{ij} +  |\mat{R}_{2}\T\mat{R}_{2}|_{ij}
\end{align}
}
If a matrix $\mat{A}$ is orthogonal, then the non-diagonal elements of the inner-product $\mat{A}\T\mat{A}$ will contain zeros.
Therefore, the smaller the $\nu_{R}$ values, more orthogonal the relation embeddings will be.
We measure $\nu_{R}$ values for the 11 relation types in the WN18RR dataset as shown in \autoref{tbl:co}.
From \autoref{tbl:co} we see that $\nu_{R}$ values are indeed small for different relation types indicating that the orthogonality requirement is satisfied as expected.
Interestingly, a moderately high (-0.515) negative Pearson correlation between H@10 and $\nu_{R}$ shows that orthogonality correlates with the better the performance.

To visualise how the orthogonality affects different relation types, we plot the elements in $\mat{R}_{1}\T\mat{R}_{1}$ and $\mat{R}_{2}\T\mat{R}_{2}$ for four relations in the WN18RR dataset in \autoref{fig:orth} for $100 \times 100$ dimensional relational embeddings.
For the two relations \textsf{also\_see} and \textsf{similar\_to} we see that the corresponding inner-products are sparse except in the main diagonal,
compared to that in \textsf{hypernym} and \textsf{member\_meronym} relations. 
On the other hand, according to \autoref{tbl:co} the H@10 values for  \textsf{also\_see} and \textsf{similar\_to} are higher than that for \textsf{hypernym} and \textsf{member\_meronym} as implied by the negative correlation.

To test for the concentration of the partition function, for a relation $R$ we compute $Z_{c}$ and $Z_{c'}$ values using respectively \eqref{eq:c} and \eqref{eq:c-prime} over a set of randomly sampled 10000 head or tail entities. We compute the standard deviations $\sigma_{c}$ and $\sigma_{c'}$ respectively for the distributions of $Z_{c}$ and $Z_{c'}$ and their geometric means as shown in \autoref{tbl:co}. 
We observed a Gaussian-like distributions for the partition functions for different relations for which smaller standard deviations indicate stronger concentration around the mean. 
Interestingly, from \autoref{tbl:co} we see a negative correlation between H@10 and the standard deviations indicating that the performance of RelWalk depends on the validity of the concentration assumption.

\subsection{Compression of Embeddings}
 \begin{figure}[t!]
 \centering
 \includegraphics[width=2.8in]{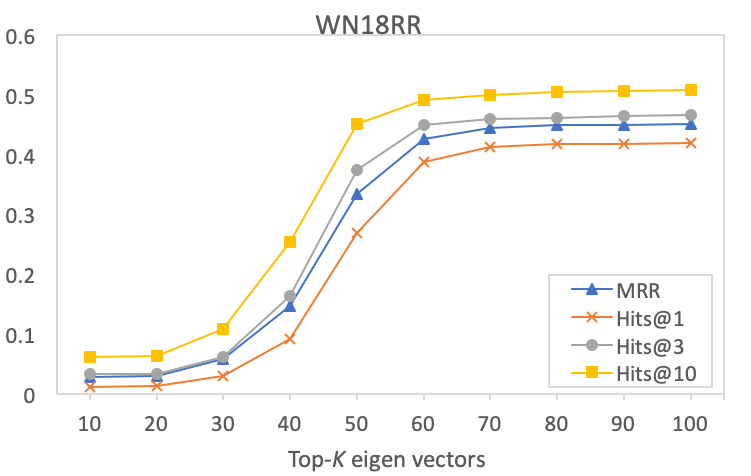}
 \caption{Results for the approximated relation embeddings for link prediction on WN18RR.}
 \label{fig:LowRank}
 \end{figure}
 
%Almost all of the previously proposed KGE methods learn vector embeddings to represent head and tail entities.
%In this regard, the entity embeddings learnt by \textbf{RelWalk} adhere to this standard practice.
%However, as already discussed in \autoref{sec:related}, when it comes to relation embeddings, there exists a vast disparity among KGE methods such as vectors~\cite{Bordes:AAAI:2011,Wang:AAAI:2014,ji-EtAl:2015:ACL-IJCNLP,xiao-huang-zhu:2016:P16-1,Lin:AAAI:2015,Yoon:2016aa,Yang:ICLR:2015,Nickel:AAAI:2016,Lin:AAAI:2015}, matrices~\cite{Bordes:AAAI:2011,Ji:AAAI:2016,Nguyen:NAACL:2016,Nickerl:ICML:2011} or tensors~\cite{Socher:NIPS:2013b}.
%Vector embeddings are attractive over matrix or tensor embeddings from a storage and computation point-of-view when representing large number of relations. 

To reduce the amount of memory required for KGEs, especially with a large KG, compressing KGEs has been studied recently~\cite{sachan-2020-knowledge}. 
\textbf{RelWalk} uses (orthogonal) matrices to represent relations,  which require more parameters compared to a vector representation of the same dimensionality of a relation.
Prior work studying lower-rank decomposition of KGEs have shown that, although linear embeddings of graphs can require prohibitively large dimensionality to model certain types of relations~\cite{Nickel:2014aa} (e.g. \textsf{sameAs}), nonlinear embeddings can mitigate this problem~\cite{Bouchard:2015}.
In this section, we propose memory-efficient low-rank approximations to the \textbf{RelWalk} embeddings. 

From the definition of orthogonality it follows that the relation embeddings $\mat{R}_{1}, \mat{R}_{2} \in \R^{d \times d}$ learnt by \textbf{RelWalk} for a particular relation $R$ means that $\mat{R}_{1}, \mat{R}_{2}$  are both full-rank and cannot be factorised as the product of two lower rank matrices.
This prevents us from directly applying matrix decomposition methods such as non-negative matrix factorisation  on the learnt relation embeddings to obtain low-rank approximations. 
Therefore, we subtract the identity matrix $\mat{I} \in \R^{n \times n}$ from the relation embedding $\mat{R} (\in \{\mat{R}_{1}, \mat{R}_{2}\})$  and factorise the remainder $\mat{R}' \in \R^{n \times n}$ as the product of two low-rank matrices using the eigendecomposition of $\mat{R}'$ as given by \eqref{eq:decomp}.
\par\nobreak
{\small
\vspace{-4mm}
\begin{align}
\label{eq:decomp}
\mat{R} &= \mat{I} + \mat{R}' \nonumber \\
	      &= \mat{I} + \mat{U}_{R} \mat{D} \mat{U}_{R}\T \nonumber \\
	      &\approx \mat{I} + \sum_{k = 1}^{K} \mat{D}_{(k,k)}{\mat{U}_{R}}_{(k,:)} {\mat{U}_{R}}_{(:,k)}
\end{align}
}
Here, $\mat{U}$ is the matrix formed by arranging the eigenvectors of $\mat{R}'$ as columns, and  $\mat{D}$ is a diagonal matrix containing the eigenvalues of $\mat{R}'$ in the descending order.
We can then use the largest $K \leq d$ eigenvalues and corresponding eigenvectors to obtain a rank-$K$ approximation in the sense of minimum Frobenius distance between $\mat{R}'$ and its rank-$K$ approximation.
In the case we use $K$ factors in the approximation, we must store $dK$ real numbers corresponding to the  $d$-dimensional eigen vectors per each of the $K$ components  as opposed to $d^{2}$ real numbers in $\mat{R}$.
The compression ratio in this case becomes $dK / d^{2} = K/d$.
When $K \ll d$, this results in a significant compression.

To empirically evaluate the trade-off between the number of singular vectors used in the compression and the accuracy of the learnt relation embeddings, we use the approximated relation embeddings for link prediction on WN18RR as shown in Figure~\ref{fig:LowRank} (similar trend was observed for FB15K237).
We use $d = 100$ dimensional relation embeddings learnt by \textbf{RelWalk} and approximate using top-$K$ eigenvectors.
From~\autoref{fig:LowRank} we see for $K > 60$ components the performance saturates in both datasets.
On the other hand, we need at least $K = 30$ components to get any meaningful accuracy for link prediction on these two datasets.
With $K=60$ and $d=100$ this approximation results in an $60\%$ compression ratio. 
 
\section{Conclusion}
We proposed \textbf{RelWalk}, a generative model of KGE and derived a theoretical relationship between the probability of a triple consisting of head, tail entities and the relation that exists between those two entities, and the embedding vectors for the two entities and embeddings matrices for the relation. 
In RelWalk, we represented entities by vectors and relations by matrices.
We then proposed a learning objective based on the theoretical relationship we derived to learn entity and relation embeddings from a given knowledge graph.
Experimental results on a link prediction and a triple classification tasks  show that  \textbf{RelWalk} 
outperforms several previously proposed KGE learning methods.
The key assumptions of \textbf{RelWalk} are validated by empirically analysing the relationship between such assumptions and the performance of the learnt embeddings from a KG.
Moreover, we studied the compressibility of the learnt relation embeddings and discovered that using only $60\%$ of the components, we can approximate the relation embeddings without any significant loss in performance.
\section*{Acknowledgement}
Yuichi Yoshida is supported by JSPS KAKENHI Grant Number JP18H05291.

\bibliography{relwalk}
\bibliographystyle{acl_natbib}

\appendix

\section{Proof of the Concentration Lemma}
\label{sec:LemmaProof}

To prove the concentration lemma, we show that the mean $\Ep_{\vec{h}}[Z_{c}]$ of $Z_{c}$ is concentrated around a constant for all knowledge vectors $\vec{c}$ and its variance is bounded.
If $\mat{P}$ is an orthogonal matrix and $\vec{x}$ is a vector, then $\norm{\mat{P}\T\vec{x}}_{2}^{2} = (\mat{P}\T\vec{x})\T(\mat{P}\T\vec{x}) = \vec{x}\T\mat{P}\mat{P}\T\vec{x} = \norm{\vec{x}}_{2}^{2}$, because $\mat{P}\T\mat{P} = \mat{I}$.
Therefore, from~\eqref{eq:c} and the orthogonality of the relational embeddings, we see that $\mat{R}_{1}\vec{c}$ is a simple rotation of $\vec{c}$ and does not alter the length of $\vec{c}$.
We represent $\vec{h} =  s_{h} \hat{\vec{h}}$, where $s_{h} = \norm{\vec{h}}$ and $\hat{\vec{h}}$ is a unit vector (i.e. $\norm{\hat{\vec{h}}}_{2} = 1$) distributed on the spherical Gaussian with zero mean and unit covariance matrix $\mat{I}_{d} \in \R^{d \times d}$. Let $s$ be a random variable that has the same distribution as $s_{h}$.
Moreover, let us assume that $s$ is upper bounded by a constant $\kappa$ such that $s \leq \kappa$.
From the assumption of the knowledge vector $\vec{c}$, it is on the unit sphere as well, which is then rotated by $\mat{R}_{1}$.

We can write the partition function using the inner-product between two vectors $\vec{h}$ and $\mat{R}_{1}\vec{c}$,
 $Z_{c} = \sum_{h \in \cV} \exp \left( \vec{h}\T (\mat{R}_{1} \vec{c}) \right)$.
Arora et al.\cite{Arora:TACL:2016} showed that (Lemma 2.1 in their paper) the expectation of a partition function of this form can be approximated as follows:
\begin{align}
\Ep_{\vec{h}}[Z_{c}] &= n \Ep_{\vec{h}}[\exp \left(\vec{h}\T\mat{R}_{1}\vec{c} \right)]  \label{eq:z1} \\
&\geq n \Ep_{\vec{h}}[1 + \vec{h}\T\mat{R}_{1}\vec{c}] = n. \label{eq:z2}
\end{align}
where $n = |\cV|$ is the number of entities in the vocabulary.
\eqref{eq:z1} follows from the expectation of a sum and the independence of $\vec{h}$ and $\mat{R}_{1}$ from $\vec{c}$.
The inequality of~\eqref{eq:z2} is obtained by applying the Taylor expansion of the exponential series and the final equality is due to the symmetry of the spherical Gaussian.
From the law of total expectation, we can write
\begin{align}
 \Ep_{\vec{h}}[Z_{c}] &= n \Ep_{\vec{h}}[\exp \left(\vec{h}\T\mat{R}_{1}\vec{c} \right)]  \nonumber \\ 
 &= n \Ep_{s_{h}}\left[ \Ep_{x \mid s_{h}} \left[  \exp \left(\vec{h}\T\mat{R}_{1}\vec{c} \right) \mid s_{h} \right] \right] \label{eq:totalexp} .
 \end{align}
where, $x = \vec{h}\T\mat{R}_{1}\vec{c}$.
 Note that conditioned on $s_{h}$, $\vec{h}$ is a Gaussian random variable with variance $\sigma^{2} = s_{h}^{2}$.
 Therefore, conditioned on $s_{h}$, $x$ is a random variable with variance $\sigma^{2} = \sigma_{h}^{2}$.
 Using this distribution, we can evaluate $\Ep_{x \mid s_{h}} \left[  \exp \left(\vec{h}\T\mat{R}_{1}\vec{c} \right) \right]$ as follows:
 \begin{align}
 \label{eq:integration}
& \Ep_{x \mid s_{h}} \left[  \exp \left(\vec{h}\T\mat{R}_{1}\vec{c} \right) \mid s_{h} \right]  \nonumber \\
&= \int_{x} \frac{1}{\sqrt{2 \pi \sigma^{2}}} \exp \left( -\frac{x^{2}}{2 \sigma^{2}} \right) \exp(x) dx \nonumber \\
 &= \int_{x} \frac{1}{\sqrt{2 \pi \sigma^{2}}} \exp \left( - \frac{{(x-\sigma^{2})}^{2}}{2\sigma^{2}} + \sigma^{2}/2 \right) dx \nonumber \\
&= \exp(\sigma^{2}/2).
\end{align}
Therefore, it follows that
\begin{align}
\Ep_{\vec{h}}[Z_{c}] &= n \Ep_{s_{h}}[\exp(\sigma^{2}/2)] \nonumber \\
&= n \Ep_{s_{h	}}[\exp(s_{h}^{2}/2)] = n \exp(s^{2}/2),
\end{align}
where $s$ is the variance of the $\ell_{2}$ norms of the entity embeddings. Because the set of entities is given and fixed, both $n$ and $\sigma$ are
constants, proving that $\Ep_{\vec{h}}[Z_{c}]$ does not depend on $c$.

% show that variance can also be bounded

Next, we calculate the variance $\Var_{\vec{h}}[Z_{c}]$ as follows:
\begin{align}
\label{eq:var1}
 \Var_{\vec{h}}[Z_{c}] &= \sum_{h} \Var_{\vec{h}}[\expip{h}{1}{c}] \nonumber \\
 &\leq n\Ep_{\vec{h}} \left[\exp \left( 2\vec{h}\T\mat{R}_{1}\vec{c} \right) \right] \nonumber\\
 &= n \Ep_{s_{h}} \left[ \Ep_{x \mid s_{h}} \left[ \exp \left( 2 \vec{h}\T\mat{R}_{1}\vec{c} \right) \mid s_{h} \right] \right].
\end{align}
Because $2  \vec{h}\T\mat{R}_{1}\vec{c}$ is a Gaussian random variable with variance $4\sigma^{2} = 4 s_{h}^{2}$
from a similar calculation as in~\eqref{eq:integration} we obtain,
\begin{align}
\label{eq:var2}
\Ep_{x \mid s_{h}} \left[ \exp \left( 2 \vec{h}\T\mat{R}_{1}\vec{c} \right) \mid s_{h} \right] = \exp(2\sigma^{2}).
\end{align}
By substituting~\eqref{eq:var2} in~\eqref{eq:var1} we have that
\par\nobreak
{\small
\begin{align}
\Var_{\vec{h}}[Z_{c}] &\leq n \Ep_{s_{h}} \left[ \exp \left( 2\sigma^{2} \right) \right] = n\Ep_{s_{h}} \left[ \exp (2 s^{2}) \right] \leq \Lambda n
\end{align}
}%
for $\Lambda = \exp(8\kappa^{2})$ a constant bounding $s \leq \kappa$ as stated. 
From above, we have bounded both the mean and variance of the partition function by constants that are independent of the knowledge vector.
Note that neither $\expip{h}{1}{c}$ nor $\expip{t}{2}{c'}$ are sub-Gaussian nor sub-exponential. Therefore, standard concentration bounds derived
for sub-Gaussian or sub-exponential random variables cannot be used in our analysis.
However, the argument given in Appendix~A.1 in~\cite{Arora:arxiv:2016} for a partition function with bounded mean and variance can be
directly applied to $Z_{c}$ in our case, which completes the proof of the concentration lemma. 
From the symmetry between $h$ and $t$, 
the concentration Lemma is also applies for the partition function $Z_{c'}=\sum_{t \in \cV} \left( \vec{t}\T\mat{R}_{2}\vec{c'} \right)$.

\section{Proof of RelWalk Theorem}
\label{sec:TheoremProof}

Let us consider the probabilistic event that $(1-\epsilon_{z})Z \leq Z_{c} \leq (1+\epsilon_{z})Z$ to be $F_{c}$ and
$(1-\epsilon_{z})Z \leq Z_{c'} \leq (1+\epsilon_{z})Z$ to be $F_{c'}$. From Lemma~\autoref{lem:concentration} we have $\Pr[F_{c}] \geq 1-\delta$. Then from the union bound we have,
\begin{align}
\Pr[\bar{F_{c}} \cup \bar{F}_{c'}] &\leq \Pr[\bar{F_{c}}] + \Pr[\bar{F}_{c'}] \nonumber \\
&= 1-\Pr[F_{c}] + 1 - \Pr[F_{c'}] \nonumber \\
&= 2\delta.
\end{align}
where $\bar{F}$ is the complement of event $F$.
Moreover, let $F$ be the probabilistic event that both $F_{c}$ and $F_{c'}$ being True.
Then from $\Pr[F] = 1 -\Pr[\bar{F_{c}} \cup \bar{F}_{c'}]$ we have, $\Pr[F] \geq 1-2\exp\left(-\Omega\left(\log^{2}n\right)\right)$.
The R.H.S. of \eqref{eq:exp3} can be split into two parts $T_{1}$ and $T_{2}$ according to whether $F$ happens or not.
\par\nobreak
{\small
\begin{align}
\label{eq:T12}
p(h,t \mid R) = &\underbrace{\Ep_{\vec{c},\vec{c}'}\left[ \frac{\exp\left(\vec{h}\T\mat{R}_{1}\vec{c}\right)}{Z_{c}} \frac{\exp\left(\vec{t}\T\mat{R}_{2}\vec{c'}\right)}{Z_{c'}} \vec{1}_{F} \right]}_{T_{1}} \nonumber \\ 
&+  \underbrace{\Ep_{\vec{c},\vec{c}'}\left[ \frac{\exp\left(\vec{h}\T\mat{R}_{1}\vec{c}\right)}{Z_{c}} \frac{\exp\left(\vec{t}\T\mat{R}_{2}\vec{c'}\right)}{Z_{c'}} \vec{1}_{\bar{F}} \right]}_{T_{2}}.
\end{align}
}%
Here, $\vec{1}_{F}$ and $\vec{1}_{\bar{F}}$ are indicator functions of the events $F$ and $\bar{F}$ given as follows:
\begin{align}
\vec{1}_{F} = \begin{cases} 1 & \text{if $F$ is True}, \\ 0 & \text{otherwise}, \end{cases}
\end{align} 
\begin{align}
\vec{1}_{\bar{F}} = \begin{cases} 0 & \text{if $F$ is True}, \\ 1 & \text{otherwise}. \end{cases}
\end{align}

\noindent\underline{Let us first show that $T_{2}$ is negligibly small}.\\
For two real integrable functions $\psi_{1}(x)$ and $\psi_{2}(x)$ in $[a,b]$, the Cauchy-Schwarz's inequality states that
{\small
\begin{align}
\label{eq:cs}
{\left[\int_{a}^{b} \psi_{1}(x)\psi_{2}(x) dx \right]}^{2} \leq \int_{a}^{b} {\left[\psi_{1}(x)\right]}^{2}dx  \int_{a}^{b} {\left[\psi_{2}(x)\right]}^{2}dx.
\end{align}
}%
Applying~\eqref{eq:cs} to $T_{2}$ in~\eqref{eq:T12} we have:
\par\nobreak
{\small
\begin{align}
 &{\left( \Ep_{c,c'} \left[\frac{1}{Z_{c}Z_{c'}} \expip{h}{1}{c} \expip{t}{2}{c'} \vec{1}_{\bar{F}} \right] \right)}^{2} \nonumber \\
 \leq & \left( \Ep_{c,c'}\left[ \frac{1}{Z^{2}_{c}} \expip{h}{1}{c}^{2}  \vec{1}_{\bar{F}} \right] \right) \times \nonumber \\
  &\left( \Ep_{c,c'}\left[ \frac{1}{Z^{2}_{c'}} \expip{t}{2}{c'}^{2}  \vec{1}_{\bar{F}} \right] \right)  \nonumber \\
  = &\left( \Ep_{c}\left[ \frac{1}{Z^{2}_{c}} \expip{h}{1}{c}^{2}  \Ep_{c'|c}\left[\vec{1}_{\bar{F}}\right] \right] \right) \times \nonumber \\
 &  \left( \Ep_{c'}\left[ \frac{1}{Z^{2}_{c'}} \expip{t}{2}{c'}^{2}  \Ep_{c|c'}\left[\vec{1}_{\bar{F}}\right] \right] \right)  \label{eq:twoterms}
\end{align}
}
Note that $Z_{c} \geq 1$ because $Z_{c}$ is the sum of positive numbers and if $\vec{h}\T\mat{R}_{1}\vec{c} > 0$ for at least one of the $h \in \cV$,
then the total sum will be greater than 1. Therefore, by dropping $Z_{c}$ term from the denominator we can further increase the first term in~\eqref{eq:twoterms} as given by~\eqref{eq:maxz}.
{\small
\begin{align}
 \label{eq:maxz}
 & \Ep_{c} \left[ \frac{1}{Z^{2}_{c}} \expip{h}{1}{c}^{2} \Ep_{c'|c}\left[\vec{1}_{\bar{F}}\right] \right] \nonumber \\
 &\leq \Ep_{c}\left[\expip{h}{1}{c}^{2} \Ep_{c'|c}\left[\vec{1}_{\bar{F}}\right]\right]
\end{align}
}%

Let us split the expectation on the R.H.S.\ of~\eqref{eq:maxz} into two cases depending on whether $\vec{h}\T\mat{R}_{1}\vec{c} > 0$ or otherwise, indicated respectively by $\vec{1}_{(\vec{h}\T\mat{R}_{1}\vec{c} > 0)}$ and $\vec{1}_{(\vec{h}\T\mat{R}_{1}\vec{c} \leq 0)}$.
\begin{align}
 &\Ep_{c}\left[\expip{h}{1}{c}^{2} \Ep_{c'|c}\left[\vec{1}_{\bar{F}}\right]\right] \nonumber \\
 &= \Ep_{c}\left[\expip{h}{1}{c}^{2} \vec{1}_{(\vec{h}\T\mat{R}_{1}\vec{c} > 0)} \Ep_{c'|c} \left[\vec{1}_{\bar{F}}\right] \right] \nonumber \\ 
 &+ \Ep_{c}\left[\expip{h}{1}{c}^{2} \vec{1}_{(\vec{h}\T\mat{R}_{1}\vec{c} \leq 0)} \Ep_{c'|c} \left[\vec{1}_{\bar{F}}\right] \right] \label{eq:twoexps}
\end{align}
The second term of~\eqref{eq:twoexps} is upper bounded by
\begin{align}
\Ep_{c,c'}\left[\vec{1}_{\bar{F}}\right] \leq \exp\left(-\Omega(\log^{2}n)\right)
\end{align}
The first term of~\eqref{eq:twoexps} can be bounded as follows:
\begin{align}
&\Ep_{c}\left[\expip{h}{1}{c}^{2} \vec{1}_{(\vec{h}\T\mat{R}_{1}\vec{c} > 0)} \Ep_{c'|c} \left[\vec{1}_{\bar{F}}\right] \right] \nonumber \\
&\leq \Ep_{c} \left[ {\exp(\alpha \vec{h}\T \mat{R}_{1} \vec{c})}^{2}  \vec{1}_{(\vec{h}\T\mat{R}_{1}\vec{c} > 0)} \Ep_{c'|c} \left[\vec{1}_{\bar{F}}\right] \right] \nonumber \\
&\leq \Ep_{c}  \left[ {\exp(\alpha \vec{h}\T \mat{R}_{1} \vec{c})}^{2} \Ep_{c'|c} \left[\vec{1}_{\bar{F}}\right] \right]
\end{align}
where $\alpha > 1$.
Therefore, it is sufficient to bound $\Ep_{c}  \left[ {\exp(\alpha \vec{h}\T \mat{R}_{1} \vec{c})}^{2} \Ep_{c'|c} \left[\vec{1}_{\bar{F}}\right] \right] $ when $\norm{\vec{h}} = \Omega(\sqrt{d})$.

Let us denote by $z$ the random variable $2\vec{h}\T\mat{R}_{1}\vec{c}$. Moreover, let $r(z) = \Ep_{c'|z} [\vec{1}_{\bar{F}}]$, which is a function of $z$ between $[0,1]$.
We wish to upper bound $\Ep_{c}[\exp(z)r(z)]$. The worst-case $r(z)$ can be quantified using a continuous version of Abel's inequality (proved as Lemma A.4 in~\cite{arora2015rand}), we can upper bound $\Ep_{c}\left[\exp(z)r(z)\right]$ as follows:
\begin{align}
 \label{eq:au1}
 \Ep_{c}\left[\exp(z)r(z)\right] \leq \Ep\left[\exp(z)\vec{1}_{[t,+\infty]}(z)\right]
\end{align}
where $t$ satisfies that $\Ep_{c}[\vec{1}_{[t,+\infty]}(z)] = \Pr[z \geq t] = \Ep_{c}[r(z)] \leq \exp(-\Omega(\log^{2}n))$.
Here, $\vec{1}_{[t,+\infty]}(z)$ is a function that takes the value $1$ when $z \geq t$ and zero elsewhere.
Then, we claim $\Pr_{c}[z \geq t] \leq \exp(-\Omega(\log^{2}n))$ implies that $t \geq \Omega(\log^{.9} n)$.

If $c$ was distributed as $\cN(0, \frac{1}{d}\mat{I})$, this would be a simple tail bound. However, as $c$ is distributed uniformly on the sphere, this requires special care, and the claim follows by applying the tail bound for the spherical distribution given by Lemma A.1 in \cite{arora2015rand} instead.
Finally, applying Corollary A.3 in \cite{arora2015rand}, we have:
\begin{align}
 \Ep[\exp(z)r(z)] \leq & \ \Ep[\exp(z)\vec{1}_{[t,+\infty]}(z)] \nonumber \\
 = & \exp(-\Omega(\log^{1.8}n))
\end{align}
From a similar argument as above we can obtain the same bound for $c'$ as well. Therefore, $T_{2}$ in~\eqref{eq:T12} can be upper bounded as follows:
\par\nobreak
{\small
\begin{align}
 & \Ep_{c,c'} \left[\frac{1}{Z_{c}Z_{c'}} \expip{h}{1}{c} \expip{t}{2}{c'} \vec{1}_{\bar{F}} \right]  \nonumber \\
 & \leq  {\left( \Ep_{c}\left[ \frac{1}{Z^{2}_{c}} \expip{h}{1}{c}^{2}  \Ep_{c'|c}\left[\vec{1}_{\bar{F}}\right] \right] \right)}^{1/2} \times \nonumber \\ 
 & {\left( \Ep_{c'}\left[ \frac{1}{Z^{2}_{c'}} \expip{t}{2}{c'}^{2}  \Ep_{c|c'}\left[\vec{1}_{\bar{F}}\right] \right] \right)}^{1/2} \nonumber \\
  &\leq \exp(-\Omega(\log^{1.8}n))
\end{align}
}%
Because $n = |\cV|$, the size of the entity vocabulary, is large (ca. $n > 10^{5}$) in most knowledge graphs, we can ignore the $T_{2}$ term in~\eqref{eq:T12}.

Combining the above analysis of $T_2$ term with~\eqref{eq:T12} we obtain an upper bound for $p(h,t \mid r)$ given by~\eqref{eq:prob-upper}.
\par\nobreak
{\small
\begin{align}
 \label{eq:prob-upper}
  &p(h,t \mid R)  \leq {(1+\epsilon_{z})}^{2} \frac{1}{Z^{2}} \Ep_{c,c'} \left[ \expip{h}{1}{c} \expip{t}{2}{c'} \vec{1}_{F} \right] \nonumber \\
  & \ \ \  \ \ \ \  \ \ \ \ \ \ \ \ \ \ \ \  \ +|\cD|\exp(-\Omega(\log^{1.8}n)) \nonumber \\
 &=  {(1+\epsilon_{z})}^{2} \frac{1}{Z^{2}} \Ep_{c,c'} \left[ \expip{h}{1}{c} \expip{t}{2}{c'} \right] + \delta_{0}
\end{align}
}%
where $|\cD|$ is the number of relational tuples $(h,r,t)$ in the KB and
$\delta_{0} = |\cD|\exp(-\Omega(\log^{1.8}n)) \leq \exp(-\Omega(\log^{1.8}n))$ by the fact that $Z \leq \exp(2\kappa)n = O(n)$, where $\kappa$ is the upper bound on $\vec{h}\T\mat{R}_{1}\vec{c}$ and $\vec{t}\T\mat{R}_{2}\vec{c'}$, which is regarded as a constant.

On the other hand, we can lower bound $p(h,t \mid r)$ as given by~\eqref{eq:prob-lower}.
\par\nobreak
{\small
\begin{align}
 \label{eq:prob-lower}
 & p(h,t \mid R) \geq {(1-\epsilon_{z})}^{2} \frac{1}{Z^{2}} \Ep_{c,c'} \left[ \expip{h}{1}{c} \expip{t}{2}{c'} \vec{1}_{F} \right] \nonumber \\
 &\geq {(1-\epsilon_{z})}^{2} \frac{1}{Z^{2}} \Ep_{c,c'} \left[ \expip{h}{1}{c} \expip{t}{2}{c'}\right] \nonumber \\ &-|\cD|\exp(-\Omega(\log^{1.8}n)) \nonumber \\
 & \geq {(1-\epsilon_{z})}^{2} \frac{1}{Z^{2}} \Ep_{c,c'} \left[ \expip{h}{1}{c} \expip{t}{2}{c'}\right] - \delta_{0}
\end{align}
}%
Taking the logarithm of both sides, from~\eqref{eq:prob-upper} and~\eqref{eq:prob-lower}, the multiplicative error translates to an additive error given by~\eqref{eq:add-error}.
\par\nobreak
{\small
\begin{align}
 \label{eq:add-error}
 &\log p(h,t \mid R) = \log \left( \Ep_{c,c'} \left[ \expip{h}{1}{c}\expip{t}{2}{c'}\right] \pm \delta_{0} \right)  \nonumber \\
 &\ \ \ \ \  \ \  \ \ \  \ \ \ \  \ \ \ \ \ \ \ \  \ \ -2\log Z + 2\log(1\pm \epsilon_{z}) \nonumber \\
 &= \log \left( \Ep_{c} \left[ \expip{h}{1}{c} \Ep_{c'|c} \left[ \expip{t}{2}{c'} \right] \right] \pm \delta_{0} \right) \nonumber \\
 & \ \ \  \ \ \ \ \ \  - 2\log Z + 2\log(1\pm \epsilon_{z}) \nonumber \\
 &= \log \left( \Ep_{c} \left[ \expip{h}{1}{c} A(c)  \right]\pm \delta_{0}\right)  \nonumber \\
 &\ \ \  \ \ \ \ \ \ -2\log Z + 2\log(1\pm \epsilon_{z})
\end{align}
}%
where $A(c) \coloneqq \Ep_{c'|c}\left[ \expip{t}{2}{c'}\right]$.

We assumed that $\vec{c}$ and $\vec{c}'$ are on the unit sphere and $\mat{R}_{1}$ and $\mat{R}_{2}$ to be orthogonal matrices.
Therefore, $\mat{R}_{1}\vec{c}$ and $\mat{R}_{2}\vec{c}'$ are also on the unit sphere.
Moreover, if we let the upper bound of the $\ell_{2}$ norm of the entity embeddings to be $\kappa' \sqrt{d}$, then we have $\norm{\vec{h}} \leq \kappa' \sqrt{d}$
and $\norm{\vec{t}} \leq \kappa' \sqrt{d}$.
Therefore, we have
\begin{align}
\langle\mat{R}_{1}\vec{h}, \vec{c}' - \vec{c}\rangle \leq \norm{\vec{h}}\norm{\vec{c} - \vec{c}'} \leq \kappa' \sqrt{d}\norm{\vec{c} - \vec{c}'}
\end{align}
Then, we can upper bound $A(c)$ as follows:
\par\nobreak
{\small
\begin{align}
A(c)  &= \Ep_{c'|c}\left[ \expip{t}{2}{c'}\right] \nonumber \\
&= \expip{t}{2}{c} \Ep_{c'|c}\left[\exp\left(\vec{t}\T\mat{R}_{2}(\vec{c}'-\vec{c})\right)\right] \nonumber \\
 &\leq \expip{t}{2}{c} \Ep_{c'|c}\left[\exp\left(\kappa' \sqrt{d}\norm{\vec{c}'-\vec{c}}\right)\right]  \nonumber \\
 &\leq (1+\epsilon_{2}) \expip{t}{2}{c}
\end{align}
}%
For some $\epsilon_{2} > 0$.
The last inequality holds because
\par\nobreak
{\small
\begin{align}
 &\Ep_{c|c'}\left[\exp\left(\kappa' \sqrt{d}\norm{\vec{c}'-\vec{c}}\right)\right] \nonumber \\ 
 &=  \int \exp\left(\kappa' \sqrt{d}\norm{\vec{c}'-\vec{c}}\right) p(c'|c) dc' \nonumber \\
 &= \underbrace{\exp(\kappa'\sqrt{d})}_{\geq 1} \underbrace{\int \exp(\norm{\vec{c}-\vec{c}'}) p(c'|c)dc'}_{\geq 1} 
 \nonumber \\ &= 1+\epsilon_{2} \label{eq:A-upper}
\end{align}
}%
To obtain a lower bound on $A(c)$ from the first-order Taylor approximation of $\exp(x) \geq 1 + x$ we observe that:
\begin{align}
  &\Ep_{c|c'}\left[\exp\left(\kappa' \sqrt{d}\norm{\vec{c}'-\vec{c}}\right)\right] \nonumber \\
  &+  \Ep_{c|c'}\left[\exp\left(-\kappa' \sqrt{d}\norm{\vec{c}'-\vec{c}}\right)\right] \geq 2 .
\end{align}
Therefore, from our model assumptions we have
\par\nobreak
{\small
\begin{align}
 \Ep_{c|c'}\left[\exp\left(-\kappa' \sqrt{d}\norm{\vec{c}'-\vec{c}}\right)\right] \geq 1 - \epsilon_{2}
\end{align}
}%
Hence,
\par\nobreak
{\small
\begin{align}
A(c)  &= \expip{t}{2}{c} \Ep_{c'|c}\left[\exp\left(\vec{t}\T\mat{R}_{2}(\vec{c}'-\vec{c})\right)\right] \nonumber \\
& \geq  \expip{t}{2}{c} \Ep_{c'|c}\left[\exp\left(-\kappa' \sqrt{d}\norm{\vec{c}'-\vec{c}}\right)\right]  \nonumber \\
&\geq (1-\epsilon_{2}) \expip{t}{2}{c} \label{eq:A-lower}
\end{align}
}%
Therefore, from~\eqref{eq:A-upper} and~\eqref{eq:A-lower} we have
\begin{align}
 \label{eq:A}
 A(c) = (1 \pm \epsilon_{2}) \expip{t}{2}{c}
\end{align}
Plugging $A(c)$ back in~\eqref{eq:add-error} results in $\log p(h,t \mid r)$ equal to: 
\par\nobreak
{\small
\begin{align}
\label{eq:A-last}
  \log& \left( \Ep_{c} \left[ \expip{h}{1}{c} A(c) \right]\pm \delta_{0}\right) -  2\log Z + 2\log(1\pm \epsilon_{z}) \nonumber \\
  =& \log \left( \Ep_{c} \left[ \expip{h}{1}{c} (1 \pm \epsilon_{2}) \expip{t}{2}{c}  \right]\pm \delta_{0}\right)  \nonumber \\
  &-2\log Z + 2\log(1\pm \epsilon_{z}) \nonumber\\ 
 =& \log \left( \Ep_{c} \left[ \expip{h}{1}{c}  \expip{t}{2}{c}  \right]\pm \delta_{0}\right)   \nonumber \\
 &-2\log Z + 2\log(1\pm \epsilon_{z}) + \log(1 \pm \epsilon_{2}) \nonumber \\ 
  =& \log \left( \Ep_{c} \left[ \exp \left(\vec{h}\T\mat{R}_{1}\vec{c} + \vec{t}\T\mat{R}_{2}\vec{c} \right)  \right]\pm \delta_{0}\right)  \nonumber \\
  &-2\log Z + 2\log(1\pm \epsilon_{z}) + \log(1 \pm \epsilon_{2}) \nonumber \\ 
   =& \log \left( \Ep_{c} \left[ \exp \left(\left(\mat{R}_{1}\T\vec{h} + \mat{R}_{2}\T\vec{t} \right)\T\vec{c}\right)  \right]\pm \delta_{0}\right)  \nonumber \\
   &-2\log Z + 2\log(1\pm \epsilon_{z}) + \log(1 \pm \epsilon_{2}) 
\end{align}
}%

Note that $\vec{c}$ has a uniform distribution over the unit sphere. In this case, from Lemma A.5 in~~\cite{arora2015rand},~\eqref{eq:spherical-exp} holds approximately.
\par\nobreak
{\small
\begin{align}
\label{eq:spherical-exp}
 &\Ep_{c}\left[\exp \left(\left(\mat{R}_{1}\T\vec{h} + \mat{R}_{2}\T\vec{t} \right)\T\vec{c}\right) \right] \nonumber \\
 &= (1 \pm \epsilon_{3}) \exp \left( \frac{ \norm{\mat{R}_{1}\T\vec{h} + \mat{R}_{2}\T\vec{t}}^{2}_{2}}{2d} \right)
\end{align}
}%
where $\epsilon_{3} = \tilde{O}(1/d)$.
Plugging~\eqref{eq:spherical-exp} in~\eqref{eq:A-last} we have that
\par\nobreak
{\small
\begin{align}
  \log p(h,t \mid R) &= \frac{ \norm{\mat{R}_{1}\T\vec{h} + \mat{R}_{2}\T\vec{t}}_{2}^{2}}{2d} + \nonumber \\
  &O(\epsilon_{z}) + O(\epsilon_{2}) + O(\epsilon_{3}) + O(\delta'_{0}) - 2\log Z
\end{align}
}%
where $\delta'_{0} = \delta_{0}$.

${\left( \Ep_{c} \left[\exp \left( (\mat{R}_{1}\T\vec{h} + \mat{R}_{2}\T\vec{t})\T\vec{c} \right) \right] \right)}^{-1} = \exp( -\Omega(\log^{1.8}n))$. 
Therefore, $\delta_{0}'$ can be ignored.
Note that $\epsilon_{3} = \tilde{O}(1/d)$ and $\epsilon_{z} = \tilde{O}(1/\sqrt{n})$ by assumption.
Therefore, we obtain that
\par\nobreak
{\small
\begin{align}
 \log p(h,t \mid R) &= \frac{ \norm{\mat{R}_{1}\T\vec{h} + \mat{R}_{2}\T\vec{t}}_{2}^{2}}{2d} + \nonumber \\
 & O(\epsilon_{z}) + O(\epsilon_{2}) + \tilde{O}(1/d) - 2\log Z
\end{align}
}%
\section{Learning with Multiple Negative Triples}
\label{sec:MultipleNeg}
This approach can be easily extended to learn from multiple negative triples as follows.
%We want to show how the marginal loss learning objective derived in Section~\ref{sec:learn} can be extended to learn from more than one negative triple per each positive triple. This formulation leads to \emph{rank-based} loss objective used in prior work on KGE. Considering that negative triples are generated via random perturbation, it is important to consider multiple negative triples during training to better estimate the classification boundary.
Let us consider that we are given a positive triple, $(h,R,t)$ and a set of $K$ negative triples $\{(h'_{k}, R, t'_{k})\}_{k=1}^{K}$.
We would like our model to assign a probability, $p(h,t \mid R)$, to the positive triple that is higher than that assigned to any of the negative triples.
This requirement can be written as \eqref{eq:max-prob}.
\begin{align} \label{eq:max-prob}
 p(h,t|R) \geq \Max{k=1, \ldots, K} p(h'_{k}, t'_{k} \mid R)
\end{align}
We could further require the ratio between the probability of the positive triple and maximum probability over all negative triples to be greater than a threshold $\eta \geq 1$ to make the requirement of \eqref{eq:max-prob} to be tighter.
\begin{align} 
\label{eq:max-prob2}
 \frac{p(h, t \mid R)}{ \Max{k=1, \ldots, K} p(h'_{k}, t'_{k} \mid R)} \geq \eta
\end{align}
By taking the logarithm of \eqref{eq:max-prob2} we obtain
{\small
\begin{align} \label{eq:max-prob3}
 \log p\left(h, t \mid R\right) - \log \left( \Max{k=1, \ldots, K} p\left(h'_{k}, t'_{k} \mid R\right) \right) \geq \log(\eta)
\end{align}
}%
Therefore, we can define the marginal loss for a misclassification as follows:
{\small
\begin{align}
 \label{eq:max-loss}
& L\left( \left(h,R,t\right), \{\left(h'_{k}, R, t'_{k}\right)\}_{k=1}^{K} \right) = \nonumber \\
& \max \Big(0, \log \left( \Max{k=1, \ldots, K} p(h'_{k}, t'_{k} \mid R) \right) + \nonumber \\
 &+ \log\left(\eta\right) -  \log p\left(h, t \mid R\right)\Big)
\end{align}
}%
However, from the monotonicity of the logarithm we have $\forall x_{1}, x_{2} > 0$, if $\log(x_{1}) \geq \log(x_{2})$ then $x_{1} \geq x_{2}$.
Therefore, the logarithm of the maximum can be replaced by the maximum of the logarithms in \eqref{eq:max-loss} as shown in \eqref{eq:max-loss2}.
\par\nobreak
{\small
\begin{align}
 \label{eq:max-loss2}
 &L\left( \left(h,R,t\right), \{\left(h'_{k}, R, t'_{k}\right)\}_{k=1}^{K} \right) = \nonumber \\
 &\max \Big(0,  \Max{k=1, \ldots, K} \log \left( p\left(h'_{k}, t'_{k} \mid R\right) \right) \nonumber \\
 &+ \log\left(\eta\right) -  \log p\left(h, t \mid R\right)\Big)
\end{align}
}
By substituting \eqref{eq:joint-prob} for the probabilities in \eqref{eq:max-loss2} we obtain the rank-based loss given by \eqref{eq:rank-loss}.
\par\nobreak
{\small
\begin{align}
 \label{eq:rank-loss}
 & L\left( (h,R,t), \{(h'_{k}, R, t'_{k})\}_{k=1}^{K} \right) = \nonumber \\
 &\max \Big(0,  2d\log(\eta) + \Max{k=1, \ldots, K}  \norm{\mat{R}_{1}\T \vec{h}'_{k} + \mat{R}_{2}\T\vec{t}'_{k}}_{2}^{2}  \nonumber \\
 &-  \norm{\mat{R}_{1}\T\vec{h} + \mat{R}_{2}\T\vec{t}}_{2}^{2} \Big)
\end{align}
}
In practice, we can use $p(h'_{k},t'_{k} \mid R)$ to select the negative triple with the highest probability for training with the positive triple.

\section{Training Details}
\label{sec:TrainDetails}
\begin{table}[t]
\small
\centering
\scalebox{0.9}{
\begin{tabular}{l l l l l l}\toprule
Dataset & \#R & \#E & Train & Test & Val. \\ \midrule
% FB15K	& 1,345	& 14,951 &	483,142	& 59,071 &	50,000 \\
FB15K237	& 237	& 14,541 &	272,115	& 17,535 &	20,466 \\
% WN18	& 18	& 40,943 &	141,442	& 5,000	& 5,000 \\
WN18RR	& 11	& 40,943 &86,835& 3,134&3,034\\
%WN11	& 11	& 38,588 &	112,581	& 10,544 &	2,609 \\
FB13	& 13	& 75,043 &	316,232	& 23,733 &	5,908 \\
\bottomrule
\end{tabular}}
\caption{Statistics of the datasets}
\label{tbl:datasets}
\end{table}

The statistics of the benchmark datasets are shown in~\autoref{tbl:datasets}.
We selected the initial learning rate ($\alpha$) for SGD in $\{0.01,0.001\}$, the regularisation coefficients ($\lambda_{1}, \lambda_{2}$) for the orthogonality constraints of relation matrices in $\{0,1,10,100\}$. 
The number of randomly generated negative triples $n_{\textrm{neg}}$ for each positive example is varied in $\{1,10,20,50,100\}$
and $d \in \{50, 100\}$.
%Optimal hyperparameter settings were: $\lambda_{1} = \lambda_{2} =10$, $n_{\textrm{neg}} =100$ for all the datasets, $\alpha=0.001$ for FB15K, FB15K237 and FB13, $\alpha=0.01$ for WN18, WN18RR and WN11. 
Optimal hyperparameter settings were: $\lambda_{1} = \lambda_{2} =10$, $n_{\textrm{neg}} =100$ for all the datasets, $\alpha=0.001$ for FB15K237 and FB13, $\alpha=0.01$ for WN18RR. 
%For FB15K237 and WN18RR $d=100$ was the best, whereas for all other datasets $d=50$ performed best.
For FB15K237 and WN18RR $d=100$ was the best, whereas for FB13 $d=50$ performed best.
Negative triples are generated by replacing a head or a tail entity in a positive triple by a randomly selected entity and learn KGEs.
We train the model until convergence or at most 1000 epochs over the training data where each epoch is divided into 100 mini-batches. The best model is selected by early stopping based on the performance of the learnt embeddings on the validation set (evaluated after each 20 epochs). 
%
%Setting the $\ell_{2}$ regularisation coefficients $\lambda_{1} = \lambda_{2} = 1$ and using $10$ negative instances in each minibatch resulted in the optimal performance for \textbf{RelWalk}, measured on the validation datasets.

\end{document}